\documentclass[10pt,twocolumn,letterpaper]{article}
\usepackage{cvpr}              
\usepackage{amsthm}
\usepackage{multirow}
\usepackage{algorithm}
\usepackage{algorithmic}
\usepackage[table]{xcolor}
\newtheorem{theorem}{Theorem}
\newtheorem{proposition}{Proposition}

\definecolor{cvprblue}{rgb}{0.21,0.49,0.74}
\definecolor{lightorange}{RGB}{230, 126, 34} 
\definecolor{lightgreen}{RGB}{46,204,113} 
\definecolor{lightblue}{RGB}{52, 152, 219} 
\usepackage[pagebackref,breaklinks,colorlinks,allcolors=cvprblue]{hyperref}

\renewcommand{\thefootnote}{\fnsymbol{footnote}}
\definecolor{cvprblue}{rgb}{0.21,0.49,0.74}
\usepackage[pagebackref,breaklinks,colorlinks,allcolors=cvprblue]{hyperref}
\usepackage[accsupp]{axessibility} 
\def\paperID{} 
\def\confName{CVPR}
\def\confYear{2026}

\title{Saliency-Guided Representation with Consistency Policy Learning for Visual Unsupervised Reinforcement Learning}

\author{
Jingbo Sun\textsuperscript{1,2,3}, 
Qichao Zhang\textsuperscript{1,3*}, 
Songjun Tu\textsuperscript{1,2,3}, 
Xing Fang\textsuperscript{1,3}, 
Yupeng Zheng\textsuperscript{1,3}, 
Haoran Li\textsuperscript{1,3}, \\
Ke Chen\textsuperscript{2*}, 
Dongbin Zhao\textsuperscript{1,2,3*} \\
{\small
\textsuperscript{1}SKL-MAIS, Institute of Automation, Chinese Academy of Sciences,
\textsuperscript{2}Pengcheng Laboratory,
}
{\small
\textsuperscript{3}School of Artificial Intelligence, University of Chinese Academy of Sciences
}
}

\begin{document}
\maketitle
\let\thefootnote\relax\footnotetext{%
\hspace{-2.5em} 
* Corresponding authors. This work is supported by National Key Research and Development Program of China under Grant 2022YFA1004000, Beijing Natural Science Foundation-Xiaomi Innovation Joint Fund L253007, Beijing Natural Science Foundation under Grant 4242052. Our code is available at \url{https://github.com/bofusun/SRCP}.
}
\begin{abstract}
Zero-shot unsupervised reinforcement learning (URL) offers a promising direction for building generalist agents capable of generalizing to unseen tasks without additional supervision.
Among existing approaches, successor representations (SR) have emerged as a prominent paradigm due to their effectiveness in structured, low-dimensional settings.
However, SR methods struggle to scale to high-dimensional visual environments.
Through empirical analysis, we identify two key limitations of SR in visual URL: (1) SR objectives often lead to suboptimal representations that attend to dynamics-irrelevant regions, resulting in inaccurate successor measures; and (2) these flawed representations hinder SR policies from modeling multi-modal skills and ensuring skill controllability.
To address these limitations, we propose \textbf{S}aliency-Guided \textbf{R}epresentation with \textbf{C}onsistency \textbf{P}olicy Learning (SRCP), a novel framework that enhances zero-shot generalization of SR methods in visual URL. 
SRCP decouples representation learning from successor training by introducing a saliency-guided dynamics task to capture dynamics-relevant representations, thereby improving successor measure.
Moreover, it integrates a fast-sampling consistency policy 
with URL-specific classifier-free guidance and tailored training objectives to improve skill-conditioned policy modeling and controllability.
Extensive experiments demonstrate that SRCP achieves state-of-the-art zero-shot generalization in visual URL and and remains compatible with various SR methods.
\end{abstract}    
\vspace{-.6cm}
\section{Introduction}
\label{sec:intro}

\begin{figure}[t]
\centering
\includegraphics[width=0.95\columnwidth]{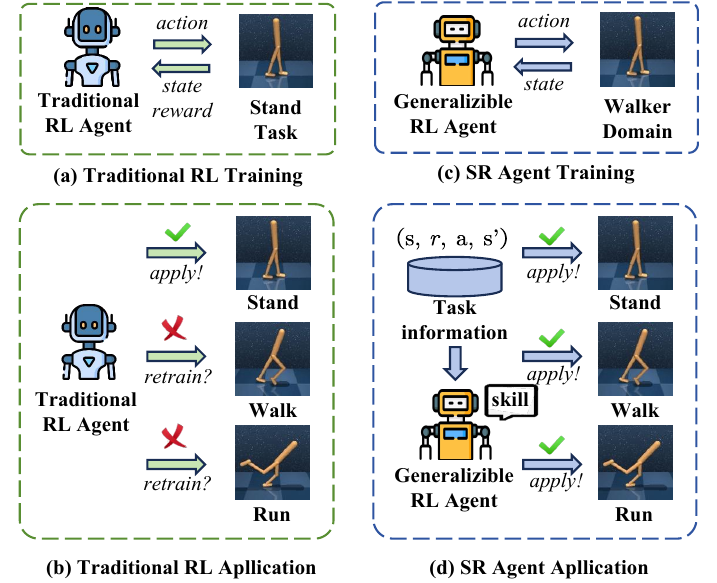} 
\caption{
Illustration of traditional RL and SR methods:
(a) Traditional RL trains with task-specific rewards.
(b) Traditional RL generalizes poorly to tasks with different rewards.
(c) SR methods learn skill-conditioned skills without reward.
(d) SR agents infer skills from minimal task information and generalize across tasks.
}
\label{figr0}
\vspace{-.5cm}
\end{figure}

Reinforcement learning (RL)~\cite{sutton_rl,lu2025equilibrium,PCM} has achieved remarkable success in various domains, including games~\cite{silver2017mastering,liu2025videos}, autonomous driving~\cite{zhang2022trajgen,lxy}, and robotics~\cite{intelligence2025pi,sun2025salience,zhao2025learning}.
Despite successes, most RL methods rely on task-specific reward functions, which limits their ability to generalize to tasks with different rewards or objectives.
Given the varied demands of real-world applications, it is crucial to develop agents capable of generalizing to unseen tasks without additional training (zero-shot generalization) or adapting via finetuning.
Unsupervised reinforcement learning (URL)~\cite{urlb,comsd} offers a promising solution by pretraining agents on reward-free data to acquire generalizable policies.

\begin{figure*}[t]
\centering
\includegraphics[width=2.1\columnwidth]{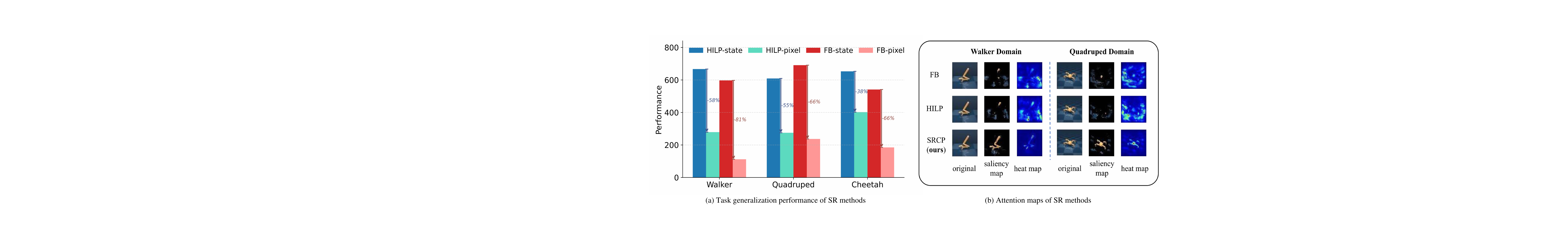} 
\caption{Generalization performance and attention analysis of prior methods and SCPL. (a) Task generalization performance of previous SR methods (HILP and FB) under low-dimensional state and high-dimensional visual inputs. (b) Comparison of saliency mask maps and attention heatmaps of prior methods and SRCP in the DMC Walker and Quadruped domains.}
\label{figr1}
\vspace{-.4cm}
\end{figure*}

Among various URL approaches, successor representation (SR) methods~\cite{sf,fb1} stand out for their zero-shot generalization capability by decoupling reward learning from environment dynamics via successor feature factorization (Fig.\ref{figr0}).
Despite their theoretical advantages, SR methods face significant challenges in high-dimensional visual environments, which are common in real-world RL applications.
As illustrated in Fig.\ref{figr1}(a), while SR methods, such as FB~\cite{fb} and HILP~\cite{hilp}, exhibit strong generalization in low-dimensional state-based URL settings, their performance drops sharply in visual URL tasks.
This discrepancy raises a key question: \textit{Why do SR methods, which excel in state-based URL, struggle in high-dimensional visual URL?}

We identify a critical limitation in extending SR methods to visual URL: it is difficult to learn effective representations from visual inputs using only the SR objective. 
As shown in Fig.\ref{figr1}(b), the learned representations of SR methods tend to focus on task-irrelevant regions of the observation. 
Our empirical analysis reveals that \textbf{such suboptimal representations lead to inaccurate estimation of successor measures, thereby degrading generalization performance}.
This representation challenge further complicates policy learning.
Unlike in state-based URL where policies are learned from structured, low-dimensional inputs, visual URL requires learning skill-conditioned policies from visual observations.
Our experiments show that this setting \textbf{poses challenges for SR policies in modeling multi-modal skill-conditioned action distributions while maintaining skill controllability.}
Traditional policy networks often struggle with this trade-off, whereas diffusion models offer expressive modeling at the cost of high inference latency.
In contrast, consistency models emerge as a lightweight yet effective alternative, enabling expressive behaviors with improved skill controllability in visual URL.

Building on these insights, we propose \textbf{S}aliency-guided \textbf{R}epresentation and \textbf{C}onsistency \textbf{P}olicy learning (SRCP), a unified framework that enhances zero-shot task generalization in visual URL.
SRCP introduces a saliency-guided dynamic representation learning that decouples dynamics-relevant representations from the SR objective, enabling the encoder to focus on task-relevant regions.
Moreover, SRCP employs a consistency policy with classifier-free guidance and tailored objectives to model multi-modal skill-conditioned policies with improved skill controllability.
Moreover, SRCP is compatible with various SR methods, making it a versatile and extensible framework.
To the best of our knowledge, SRCP is the first framework that explicitly targets zero-shot generalization in visual URL.
Our main contributions are summarized as follows:
\begin{itemize}
\item We investigate the failure of SR methods in visual URL and find that representation learning often yields suboptimal features, which hinder both the estimation of successor measures and the modeling of expressive skill-conditioned policies, ultimately limiting generalization.
\item We propose SRCP, a unified framework that decouples representation learning from the SR objective via saliency-guided dynamics-aware learning, enabling the extraction of dynamics-relevant features.
\item We design a consistency policy with specific classifier-free guidance to capture multi-modal skill-conditioned action distributions with enhanced skill controllability.
\item We validate SRCP on 16 challenging visual control tasks across 4 datasets in the ExORL benchmark, demonstrating superior zero-shot generalization performance.
\end{itemize}

\vspace{-.15cm}
\section{Related Work}
\label{sec:formatting}
\vspace{-.1cm}
\subsection{Task Generalization in URL}
Unsupervised reinforcement learning~\cite{urlb, rnd, disagreement, apt, smm} aims to pretrain policies on task-agnostic datasets and generalize them to downstream tasks through task adaptation or zero-shot generalization.
URL methods can be broadly classified into unsupervised skill discovery (USD) approaches and successor representation (SR) methods.
USD methods~\cite{diayn, dads, cic, lsd, metra, cesd} learn diverse skills by maximizing the divergence between skill-specific and average state distributions.
However, the lack of alignment between skills and task objectives limits their zero-shot generalization~\cite{dvfb, geometry1, geometry2}.
SR methods~\cite{usf, sf} enable zero-shot generalization by associating skills with rewards. 
They can be categorized into successor feature (SF)~\cite{usf, sf} and forward-backward representation (FB)~\cite{fb, fb1, vcfb}. 
SF methods learn basic feature via various representations, including contrastive learning~\cite{cl}, Laplacian eigenfunctions~\cite{lap}, low-rank approximations~\cite{lra-p} and Hilbert representations~\cite{hilp}.
In contrast, FB methods~\cite{fb1} estimate successor measures without explicit feature learning.
SR methods can infer near-optimal skills from limited data, enabling zero-shot generalization.
While SR methods generalize well in state-based URL, they demonstrate limited performance in visual domains.
To address this limitation, we propose the SRCP framework to improve generalization of SR methods in visual URL.

\vspace{-.1cm}
\subsection{Diffusion Actor in RL}
\vspace{-.1cm}
Diffusion models~\cite{diffusion} have demonstrated remarkable capabilities in modeling complex distributions and generating diverse, high-quality samples. 
These properties make them particularly suitable for RL, where they are commonly adopted to model diverse policies~\cite{imit_diff, offline_diff, edp}, supporting expressive action distributions.
However, the practical deployment of diffusion models in RL often suffers from high inference cost. To address this, recent works have explored consistency models as lightweight alternatives~\cite{cm2024, cccp,ligeneralizing}, replacing diffusion models within actor-critic frameworks while maintaining competitive performance in continuous control tasks.
Despite the success of diffusion and consistency models in RL, their applicability to visual URL with skill-conditioned policies remains underexplored.
In this work, we propose a consistency-based policy with classifier-free guidance to promote multi-modal action distributions and enhance skill controllability in visual URL.
\section{Preliminaries}

\vspace{-.1cm}
\subsection{Zero-shot Task Generalization in URL}
Zero-shot task generalization refers to an agent’s ability to directly applied to new tasks with different reward functions, without additional training~\cite{fb}. 
In URL, the agent is pre-trained with reward-free data to learn skill-conditioned policies. 
For zero-shot generalization, the agent infers skills for a downstream task from a small dataset containing reward signals. 
The inferred skills are then used to produce a policy for the downstream task without  further learning.

\vspace{-.1cm}
\subsection{Successor Representations}
Given a basic feature map $\varphi: \mathcal{S} \to \mathbb{R}^d$, the successor feature represents the expected discounted sum of future features under policies $\pi_z$ indexed by $z \in \mathbb{R}^d$:
$\psi(s_0, a_0, z) = \mathbb{E} \left[ \sum_{t=0}^{\infty} \gamma^t \varphi(s_{t+1}) \,\middle|\, (s_0, a_0), \pi_z \right]$.
Given a reward function $r(s)$, the corresponding $Q$-function under policy $\pi$ can be expressed as: $Q_r^\pi(s_0, a_0) = M^\pi r = \psi^\pi(s_0, a_0)^\top z,$ where the reward function is linearly approximated as $r(s) \approx \varphi(s)^\top z$.
After pretraining, given a reward function $r(s)$, the optimal skill $z_r$ can be computed with: $z_r := \mathbb{E}_\rho[\varphi \varphi^\top]^{-1} \, \mathbb{E}_\rho[\varphi r],$
which enables zero-shot generalization via skill inference.

\subsection{SR Objective in Visual URL}
In visual URL, an encoder is introduced to extract visual features, and both the encoder and successor modules are jointly optimized under the SR objective.



\vspace{-.1cm}
\section{What Limits SR methods in Visual URL?}
\vspace{-.1cm}
\paragraph{How representations limit generalization in visual SR?}

\begin{figure}[t]
\centering
\includegraphics[width=1.0\columnwidth]{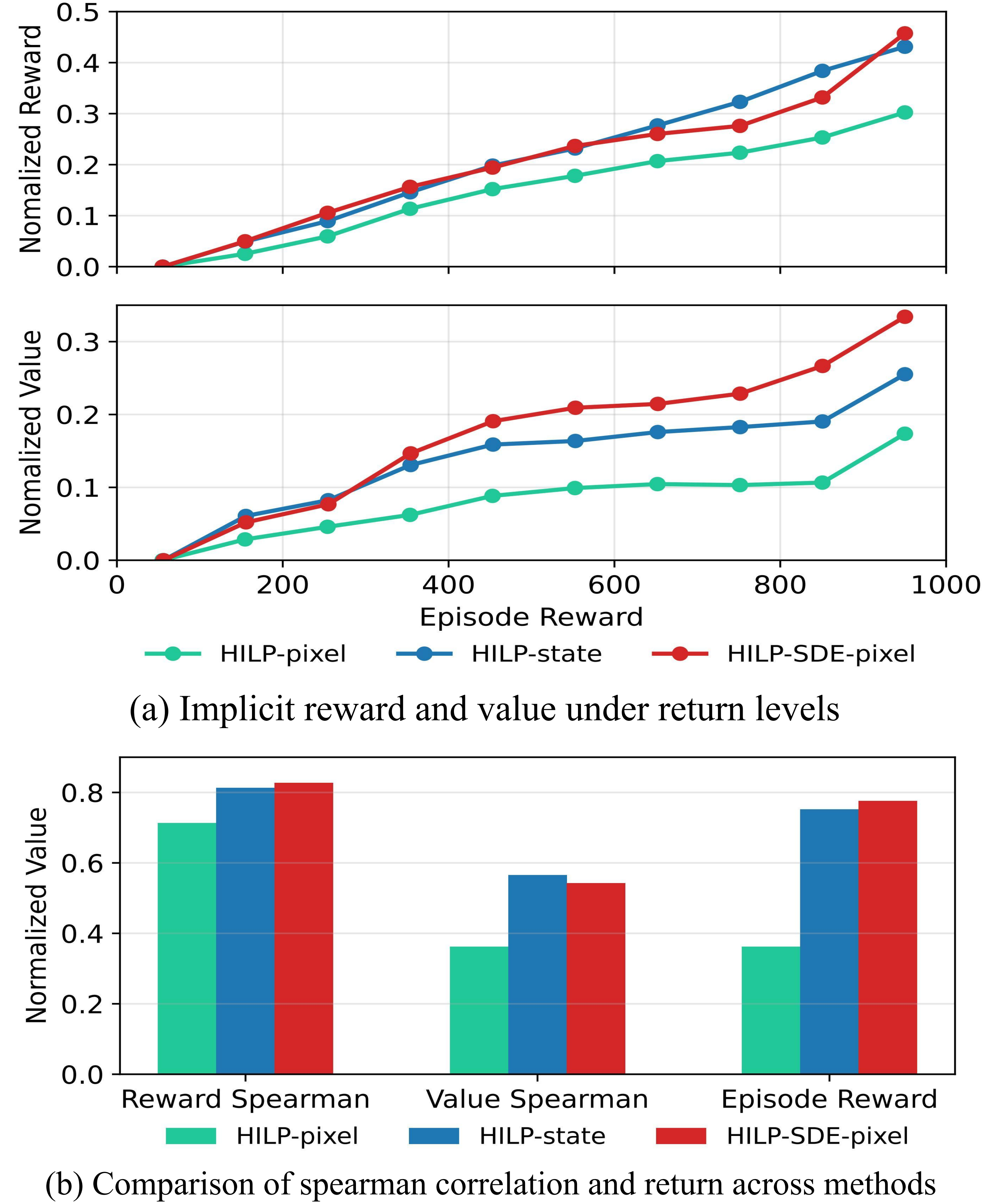} 
\caption{
Value and performance analysis of methods in Walker Stand task. (a) Implicit reward and value estimations of methods across trajectories with varying returns; (b) Spearman correlations of reward and value with return, and performance across methods.
}
\label{figr3}
\vspace{-.3cm}
\end{figure}
Unlike state-based SR, which trains only the successor networks under the SR objective, visual SR jointly optimizes both the encoder and successor networks. 
As shown in Fig.\ref{figr1}, this coupling often biases the learned representations toward dynamics-irrelevant regions. 
Since these representations are further used to derive both basic and successor features for reward and value estimation, which is essential for zero-shot generalization.
We hypothesize that the tight coupling training leads to suboptimal representations,  impairing both the basic features and successor measures, and ultimately limiting generalization.
To validate this hypothesis, we compare three HILP variants on the \textit{Walker Stand} task: HILP-pixel (visual input), HILP-state (state input), and HILP-SDE-pixel (visual input with our saliency-guided encoder).
Fig.\ref{figr3} (a) shows that all three methods produce reward estimates increasing with trajectory returns, indicating robust basic feature learning.
However, value estimates from HILP-state and HILP-SDE-pixel closely track trajectory returns, while HILP-pixel exhibits a weaker value-return trend due to suboptimal representations. 
Fig.\ref{figr3} (b) compares Spearman correlations between trajectory returns and predicted rewards or values, along with average cumulative returns. 
All methods achieve high reward-return correlations, but only HILP-state and HILP-SDE-pixel maintain strong value-return correlations and high cumulative returns. 
In contrast, HILP-pixel shows a weaker correlation and reduced task returns, suggesting inferior value estimation quality.
These results demonstrate that the entangled optimization of the encoder and SR training leads to suboptimal representations, which \textbf{primarily impair successor measures while having minimal impact on basic features, thereby limiting generalization in visual URL.}

\vspace{-.3cm}
\paragraph{Theoretical analysis of successor feature approximation.}
The experimental results above are grounded in the theory of successor features. Let $\psi^{\pi_z}$ denote the true successor features of policy $\pi_z$, and $\hat\psi^z$ be the learned approximation. The generalization performance of the policy $\pi_{z_r}$, measured by its deviation from the optimal value function, is bounded by the approximation error of the successor features:
\begin{equation}
\|\hat V^{\pi_{z_r}} - V^\star\|_\infty
\le
\frac{3\|z_r\|_*}{1-\gamma}\;
\sup_{s,a}\|\ \epsilon  \|,
\end{equation}
where \( \epsilon=\hat\psi^{z_r}(s,a)-\psi^{\pi_{z_r}}(s,a)\). The result highlights that the quality of the successor features impacts the generalization. Detail proofs are provided in Appendix C.


\vspace{-.3cm}
\paragraph{Do visual URL tasks demand more effective skill-conditioned policies?}
\begin{figure}[t]
\centering
\includegraphics[width=1.0\columnwidth]{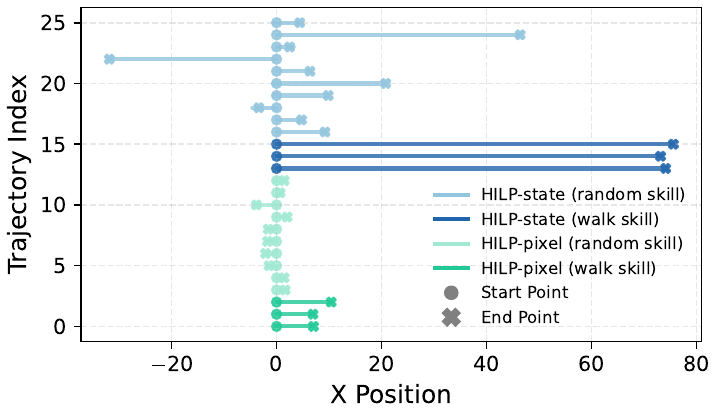} 
\caption{Trajectory comparison of methods with random and walking skills in the walker domain.}
\label{figr4}
\vspace{-.3cm}
\end{figure}

\begin{figure*}[t]
\centering
\includegraphics[width=2.1\columnwidth]{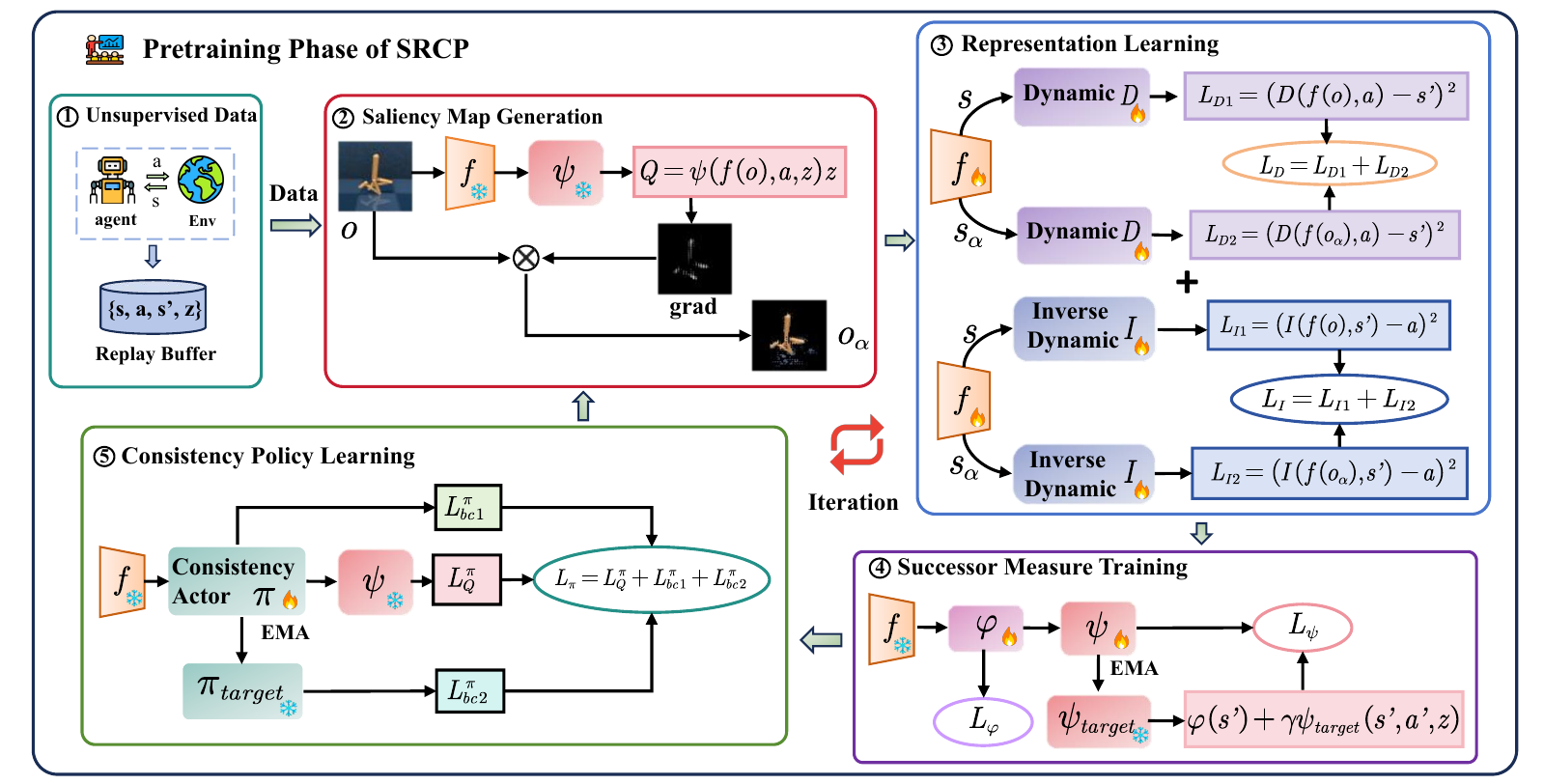} 
\caption{SRCP pretraining framework.
SRCP first leverages unsupervised data to generate saliency maps that guide the learning of saliency-aware dynamic representations. The resulting encoder is shared between successor measure training and consistent policy learning, enabling effective successor measure and expressive policy behaviors, thereby enhancing generalization.}
\label{figr5}
\vspace{-.35cm}
\end{figure*}

In visual URL, SR must learn skill-conditioned policies from latent representations derived from high-dimensional visual inputs. 
Such representations may be suboptimal, which can hinder the learning of expressive and controllable skill-conditioned policies. 
To evaluate this effect, we compare HILP-state (trained in compact state space) and HILP-pixel (trained on raw visual inputs) in the \textit{Walker} domain of DMC~\cite{dmc}.
Fig.\ref{figr4} visualizes trajectories under two conditions: (1) random skill vectors and (2) a skill inferred for the walking task. 
Under random skills, HILP-state produces diverse and meaningful trajectories, demonstrating effective skill expressiveness. 
HILP-pixel, in contrast, exhibits limited and less diverse movements. 
When conditioned on the inferred walking skill, HILP-state achieves consistent long-range locomotion, whereas HILP-pixel produces short and unstable walking behaviors, indicating difficulty in skill controllability.
These observations reveal a fundamental gap between state-based and visual URL settings. 
While structured states enable direct and effective skill-conditioned policy learning, the reliance on learned visual representations in visual URL \textbf{poses challenges in modeling multi-modal skill-conditioned action distributions and ensuring skill controllability, thereby limiting generalization}.

\vspace{-.1cm}
\section{Methodology}
\vspace{-.1cm}
\subsection{Framework Overview}
\vspace{-.1cm}
To improve generalization, we propose SRCP, a visual SR framework that learns effective representations and multi-modal skill-conditioned policies.
As illustrated in Fig.\ref{figr5}, SRCP consists of five components optimized in an iterative training loop:
(1) \textbf{Unsupervised Datasets}: Diverse task-agnostic trajectories are used as the unsupervised training source.
(2) \textbf{Saliency Map Generation}: At each iteration, saliency maps are computed to highlight attention regions of the encoder and successor measure.
(3) \textbf{Representation Learning}: The encoder is updated via a saliency-guided dynamics task, encouraging it to focus on dynamics-relevant features guided by saliency maps.
(4) \textbf{Successor Measure Training}: The basic feature and successor feature are jointly optimized with the updated encoder.
(5)\textbf{ Consistency Policy Learning}: A skill-conditioned consistency policy is trained with classifier-free guidance to model multi-modal skills and ensure skill controllability.

\subsection{Saliency Map Generation}
To encourage effective representations that focus on dynamics-relevant regions, we generate saliency maps from the gradients of the value function derived from successor features.
Given an input observation \( o \), the encoder \( f \) extracts a representation \( s=f(o) \), which is used to compute both the basic feature \( \varphi(f(o)) \) and the successor feature \( \psi(f(o), a, z) \). The value function is defined as \( Q = \psi(f(o), a, z)^\top z \).
We then compute the gradient of \( Q \) with respect to the input observation \( o \), and construct a saliency map \( o_\alpha \) by retaining only the top-ranked pixels based on gradient magnitude, while masking out less informative regions. 

\subsection{Saliency Dynamic Representation Learning}
To facilitate effective representation learning for successor measures, we decouple it from the SR objective by introducing a saliency-guided dynamic representation module based on saliency maps. 
To encourage the encoder to capture dynamics-relevant representations, we incorporate both a forward and an inverse dynamics model to train the encoder.
The forward dynamics model \( D \) predicts the next-state representation \( s' \) from the current observation and action. The forward dynamics loss is defined as:
\begin{equation}
    \mathcal{L}_{D1} = \left\| D(f(o), a) - s' \right\|^2,
\end{equation}
where \( f(o) \) is the representation of the current observation, \( a \) is the action, and \( s'\) is the representation of the next observation.
The inverse dynamics model \( I \) predicts the action \( a \) given the current and next representations. The inverse dynamics loss is:
\begin{equation}
    \mathcal{L}_{I1} = \left\| I(f(o), s') - a \right\|^2.
\end{equation}
To further encourage the encoder to focus on dynamics-relevant regions within observations, we leverage the saliency-masked observation \( o_\alpha \) when training both the forward and inverse dynamics models.
The forward dynamics model predicts the representation of the next observation \( s' \) using the saliency representation \( f(o_\alpha) \) and the action \( a \). The saliency forward dynamics loss is defined as:
\begin{equation}
    \mathcal{L}_{D2} = \left\| D(f(o_\alpha), a) - s' \right\|^2.
\end{equation}
Similarly, the inverse dynamics model takes the representations of the saliency map and the next observation as input to predict the action. The saliency inverse dynamics loss is:
\begin{equation}
    \mathcal{L}_{I2} = \left\| I(f(o_\alpha), s') - a \right\|^2.
\end{equation}
By minimizing the saliency losses, the encoder is encouraged to predict the underlying dynamics from observations and their corresponding saliency maps, thereby guiding the agent to focus on dynamics-relevant regions and improving representation. The total representation loss is defined as:
\begin{equation}
    \mathcal{L}_{\text{rep}} = \mathcal{L}_{D1}  + \mathcal{L}_{I1} + \beta*(\mathcal{L}_{D2} + \mathcal{L}_{I2}).
\end{equation}
This saliency-guided representation learning encourages SRCP to capture dynamics-relevant features that facilitate successor training and enhance generalization in visual SR.


\subsection{Successor Measure Training}
While various features can be employed, we employ a Hilbert space representation~\cite{hilp} to learn the basic feature $\varphi$. The learned basic feature is further used to update the successor feature $\psi$. 
Specifically, the successor feature is trained to satisfy Bellman consistency, formulated as:
\begin{equation}
L_\psi = \left\| \psi(s, a, z) -  \varphi(s') - \gamma\bar{\psi}(s', a', z) \right\|^2,
\end{equation}
where $\bar{\psi}(\cdot)$ denotes the target successor feature learner.



\subsection{Consistency Policy Learning}
Visual URL poses challenges in modeling multi-modal skill-conditioned action distributions while ensuring skill controllability.
While consistency models demonstrate strong flexibility in capturing multi-modal behaviors, effectively integrating skill conditions to realize controllable skill policies in visual URL remains an open challenge.
To address this, we propose a skill-conditioned consistency policy with URL-specific classifier-free guidance to enhance both behavioral diversity and controllability.
We formulate the policy as a denoising process, where a network 
\( g_\theta \) learns to recover the clean action \( a_0 \) from a noisy version \( a_t \), conditioned on both the state $s$ and a skill vector $z$:
\begin{equation}
 \quad g_\theta(s, a_t, z) \approx a_0, \forall t.
\end{equation}
This formulation enables consistent behavior across noise levels and allows the policy to model the conditional distribution \( p(a_0 \mid s, z) \).
To balance diversity and controllability, we adopt a classifier-free guidance strategy inspired by diffusion models. Rather than using the conditioned output $g_\theta(s, a_t, z)$ directly, the final action is computed as:
\begin{equation}
a = g_\theta(s, a_t, \varnothing) + \omega \cdot \left( g_\theta(s, a_t, z) - g_\theta(s, a_t, \varnothing) \right),
\end{equation}
where $a$ is the action, $\varnothing$ denotes the unconditional skill input, and $\omega$ is the guidance weight controlling the strength of conditioning.
Notably, the state $s$ is included in both conditioned and unconditioned branches, enabling the model to capture state-dependent multi-modal action distributions while ensuring that actions are primarily driven by skills. 

\begin{table*}[h]
\vspace{-0.45cm}
\setlength{\tabcolsep}{10pt}
\renewcommand\arraystretch{1.1}
\centering
\resizebox{\textwidth}{!}{
\begin{tabular}{ccccccccccc}
\toprule
Domain & Task & AE~\cite{autoencoder} & CL~\cite{cl} & LAP~\cite{lap}  & LRA-SR~\cite{lap}  & FDM~\cite{fb1} & FB~\cite{fb1} & HILP~\cite{hilp} & SRCP \\
\midrule
\multirow{5}{*}{Walker} 
& Stand &  $614 \pm 85$ &  $255 \pm 95$ & $343 \pm 128$ & $246 \pm 114$  & $697 \pm 120$ & $234 \pm 62$ & $424 \pm 58$ & $\textbf{752} \pm \textbf{72}$ \\
& Walk  &  $301 \pm 122$ & $72 \pm 39$ & $87 \pm 41$ & $88 \pm 67$ & $459 \pm 91$ & $100 \pm 48$  & $260 \pm 151$ & $\textbf{486} \pm \textbf{42}$ \\
& Run   &  $127 \pm 24$ & $46 \pm 17$  & $60 \pm 21$ & $52 \pm 25$ & $195 \pm 60$ & $61 \pm 25$ & $112 \pm 51$ & $\textbf{204} \pm \textbf{21}$ \\
& Flip  &  $224 \pm 95$ & $83 \pm 37$ & $71 \pm 24$ & $62 \pm 32$ & $251 \pm 73$ & $63 \pm 16$ & $154 \pm 62$ & $\textbf{369} \pm \textbf{46}$ \\
\cline{2-10}
& Average  & $317$ & $114$ & $140$ & $112$ & {$401$} & $115$ & $238$ & $\textbf{453}$ \\
\midrule
\multirow{5}{*}{Quadruped}
& Stand & $351 \pm 136$ & $105 \pm 19$ & $265 \pm 123$ & $390 \pm 141$ & $355 \pm 80$ & $270 \pm 91$ & $344 \pm 150$ & $\textbf{514} \pm \textbf{111}$ \\
& Walk  & $180 \pm 66$ & $51 \pm 8$ & $143 \pm 70$  & $208 \pm 71$ & $164 \pm 34$ & $141 \pm 57$ & $174 \pm 78$ & $\textbf{268} \pm \textbf{42}$ \\
& Run   & $167 \pm 69$ & $53 \pm 6$ & $136 \pm 79$ & $192 \pm 66$ & $168 \pm 29$ & $131 \pm 44$ & $169 \pm 77$ & $\textbf{265} \pm \textbf{59}$ \\
& Jump  & $238 \pm 70$ & $77 \pm 14$ & $173 \pm 71$ & $272 \pm 97$ & $237 \pm 38$ & $190 \pm 62$ & $239 \pm 98$ & $\textbf{374} \pm \textbf{108}$ \\
\cline{2-10}
& Average & $234$ & $72$ & $179$ & {$266$} & $231$ & $183$ & $232$ & $\textbf{355}$ \\
\midrule
\multirow{5}{*}{Cheetah}
& Walk & $503 \pm 179$ & $115 \pm 94$ & $235 \pm 197$ & $294 \pm 134$ & $493 \pm 219$ & $206 \pm 222$ & $690 \pm 255$ & $\textbf{805} \pm \textbf{110}$ \\
& Walk Backward & $139 \pm 101$ & $135 \pm 109$ & $102 \pm 126$ & $189 \pm 103$ & $444 \pm 82$ & $344 \pm 333$ & $712 \pm 152$ & $\textbf{853} \pm \textbf{70}$ \\
& Run & $179 \pm 91$ & $30 \pm 16$ & $57 \pm 67$ & $67 \pm 29$ & $176 \pm 59$ & $79 \pm 82$ & $218 \pm 83$ & $\textbf{269} \pm \textbf{58}$ \\
& Run Backward & $50 \pm 39$ & $27 \pm 21$ & $19 \pm 24$ & $35 \pm 25$ & $99 \pm 51$ & $108 \pm 103$ & $197 \pm 49$ & $\textbf{243} \pm \textbf{43}$ \\
\cline{2-10}
& Average & $218$ & $77$ & $103$ & $146$ & $303$ & $184$ & $454$ & $\textbf{543}$ \\
\midrule
\multirow{5}{*}{Jaco}
& Reach Top Left & $31 \pm 3$ & $4 \pm 1$ & $25 \pm 12$ & $22 \pm 16$ & $37 \pm 9$ & $45 \pm 40$ & $27 \pm 7$ & $\textbf{48} \pm \textbf{12}$ \\
& Reach Top Right & $26 \pm 6$ & $4 \pm 3$ & $36 \pm 10$ & $19 \pm 13$ & $46 \pm 11$ & $40 \pm 19$ & $35 \pm 12$ & $\textbf{47} \pm \textbf{10}$ \\
& Reach Bottom Left & $24 \pm 8$ & $5 \pm 3$ & $29 \pm 10$ & $14 \pm 8$ & $18 \pm 4$ & $\textbf{39} \pm \textbf{11}$ & $29 \pm 6$ & $31 \pm 8$ \\
& Reach Bottom Right & $17 \pm 4$ & $4 \pm 5$ & $35 \pm 6$ & $16 \pm 12$ & $26 \pm 6$ & $\textbf{37} \pm \textbf{12}$ & $\textbf{37} \pm \textbf{13}$ & $\textbf{37} \pm \textbf{7}$ \\
\cline{2-10}
& Average & $25$ & $4$ & $31$ & $18$ & $32$ & $40$ & {$32$} & $\textbf{41}$ \\
\bottomrule
\end{tabular}}
\caption{Zero-shot generalization results across 16 tasks.  
Each score is averaged over 4 datasets and 4 seeds (i.e. 16 runs).}
\vspace{-0.35cm}
\label{table_zeroshot_new}
\end{table*}

The consistency policy is trained with three key objectives.
First, we incorporate a skill-conditioned value objective that encourages the policy to maximize expected skill returns and improve skill controllability under the successor feature formulation:
\begin{equation}
\mathcal{L}^{\pi}_{Q} = \mathbb{E}_{(s,z)\sim \mathcal{D}}[-\psi(s, \pi(s, z), z)^\top z].
\end{equation}
Second, to stabilize training and mitigate distributional shift in offline RL, we incorporate a skill-conditioned behavior consistency loss. This loss regularizes the policy by encouraging consistent outputs on Gaussian-noise perturbed dataset actions across different noise levels:
\begin{equation}
\mathcal{L}^{\pi}_{bc1} = \mathbb{E}_{(s,z)\sim \mathcal{D}} \left\| g_\theta(s, a_{dt_1}, z) - g_\theta(s, a_{dt_2}, z) \right\|^2,
\end{equation}
where \( a_{dt} = a_{\mathrm{data}} + \sigma_t \epsilon \), \( \epsilon \sim \mathcal{N}(0, I) \), \( a_{\mathrm{data}} \) denotes the dataset action and \( \sigma_t \) is the noise scale at level \( t \).
Third, an unconditional behavior consistency loss is introduced to promote expressive multi-modal action distribution modeling.
Specifically, the unconditional branch $g_\theta(s, a_t, \varnothing)$ is trained to produce consistent outputs on noisy actions sampled from a random skill policy, encouraging diverse multi-modal skill-conditioned behaviors:
\begin{equation}
\mathcal{L}^{\pi}_{bc2} = \mathbb{E}_{s\sim \mathcal{D}} \left\| g_\theta(s, a_{rt_1}, \varnothing) - g_\theta(s, a_{rt_2}, \varnothing) \right\|^2,
\end{equation}
where \( a_{rt} = \pi_{\mathrm{target}}(s, z') + \sigma_t \epsilon \) denotes a noise-perturbed action sampled from the random skill policy \(\pi_{\mathrm{target}}(s, z')\).
The final training objective combines the three components:
\begin{equation}
\mathcal{L}_{\pi} = \mathcal{L}^{\pi}_{Q} + \lambda_1 \mathcal{L}^{\pi}_{bc1} + \lambda_2 \mathcal{L}^{\pi}_{bc2},
\end{equation}
where $\lambda_1$ and $\lambda_2$ are weighting coefficients. 
Jointly optimizing these objectives encourages the policy to learn skill controllable and multi-modal behaviors, which are critical for generalization in visual URL.
\vspace{-.1cm}
\section{Experiments}
\vspace{-.1cm}
In this section, we conduct comprehensive experiments to evaluate the effectiveness of the proposed SRCP framework in visual URL tasks. Our experiments aim to answer the following key research questions: (1) Does SRCP improve task generalization in visual URL? (2) How does each component of SRCP contribute to task generalization? 
(3) Can SRCP be integrated into other SR methods?
(4) How do hyperparameters affect generalization?


\subsection{Benchmark}
We adopt the URL Benchmark~\cite{urlb} and ExORL datasets~\cite{exorl} to evaluate the task generalization performance in visual URL. 
The benchmark consists of 16 visual continuous control tasks spanning 4 domains, each pretrained with 4 different datasets: RND, PROTO, APS, and APT.
We compare SRCP against several state-of-the-art (SOTA) zero-shot task generalization methods. 
Specifically, we evaluate SRCP in comparison with:
(1) Successor feature (SF) methods utilizing different basic feature learning techniques, including AE~\cite{autoencoder}, CL~\cite{cl}, Lap~\cite{lap}, LRA-SR~\cite{lap}, FDM~\cite{fb1}, and HILP~\cite{hilp} representations; and
(2) the Forward-Backward (FB)~\cite{fb1} representation, a SOTA method for zero-shot URL. See Appendices D, E, F for settings, and baselines.


\subsection{Does SRCP improve generalization?}
As shown in Table~\ref{table_zeroshot_new}, we evaluate the generalization performance of SRCP and baseline methods across 16 tasks spanning 4 domains.
Each result is averaged over 16 runs, obtained from 4 datasets with 4 random seeds each. 
In SRCP, we adopt Hilbert Representations for basic feature learning.
SRCP consistently achieves superior or near-optimal performance across all tasks.
Specifically, SRCP outperforms the current SOTA by 13\%, 33\%, and 11\% in the Walker, Quadruped, and Cheetah domains, respectively.
Moreover, the method achieves the best average results across all four datasets, demonstrating strong robustness to dataset variations.
The comparison with HILP further underscores the architectural improvements SRCP brings to SR methods.
Full experimental results are provided in Appendix A.



\begin{table}[t]
\setlength{\tabcolsep}{6pt}
\renewcommand\arraystretch{1.2}
\centering
\resizebox{0.45\textwidth}{!}{
\begin{tabular}{ccccc}
\toprule
Domain & HILP & SRCP w/o SE & SRCP w/o CP & SRCP \\
\midrule
Walker    & $231$ & $345$ & $396$ & $\mathbf{439}$ \\
Quadruped & $305$ & $352$ & $406$ & $\mathbf{485}$ \\
Cheetah   & $599$ & $600$ & $598$ & $\mathbf{602}$ \\
Jaco      & $34$  & $44$  & $43$  & $\mathbf{50}$ \\
\bottomrule
\end{tabular}}
\caption{Ablation study of SRCP on the RND dataset across 4 domains. 
Each score is the averaged return over 4 tasks per domain, with 4 random seeds per task (i.e., 16 runs).}
\label{table_zeroshot_ablation}
\vspace{-0.4cm}
\end{table}

\vspace{-.05cm}
\subsection{How components contribute to generalization?}
To explore the contribution of each component in SRCP to task generalization, we conduct an ablation study on 16 tasks from the RND dataset using 4 random seeds.
As shown in Table~\ref{table_zeroshot_ablation}, HILP jointly training the encoder and successor components using the Hilbert basic feature objective and successor feature objective.
SRCP w/o SE removes saliency-guided dynamics representation learning from SRCP, while SRCP w/o CP excludes the consistency policy.
The results demonstrate that SRCP w/o SE outperforms HILP, suggesting that consistency policy learning enhances both action diversity and skill controllability, further benefiting generalization. 
SRCP w/o CP also surpasses HILP, indicating the effectiveness of saliency-guided dynamics representation learning in improving generalization. 
When combined, SRCP achieves the superior performance, demonstrating that improving policy modeling on effective representations significantly enhances generalization, highlighting both components are critical in visual URL tasks.

\begin{figure}[t]
\centering
\includegraphics[width=.95\columnwidth]{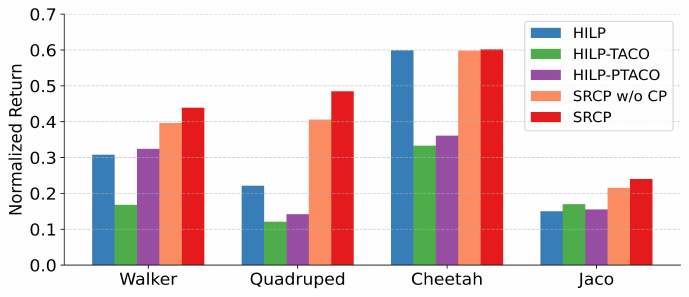} 
\caption{Generalization performance of HILP with various representation learning methods.}
\label{figr7}
\vspace{-.3cm}
\end{figure}

\begin{figure}[t]
\centering
\includegraphics[width=1.0\columnwidth]{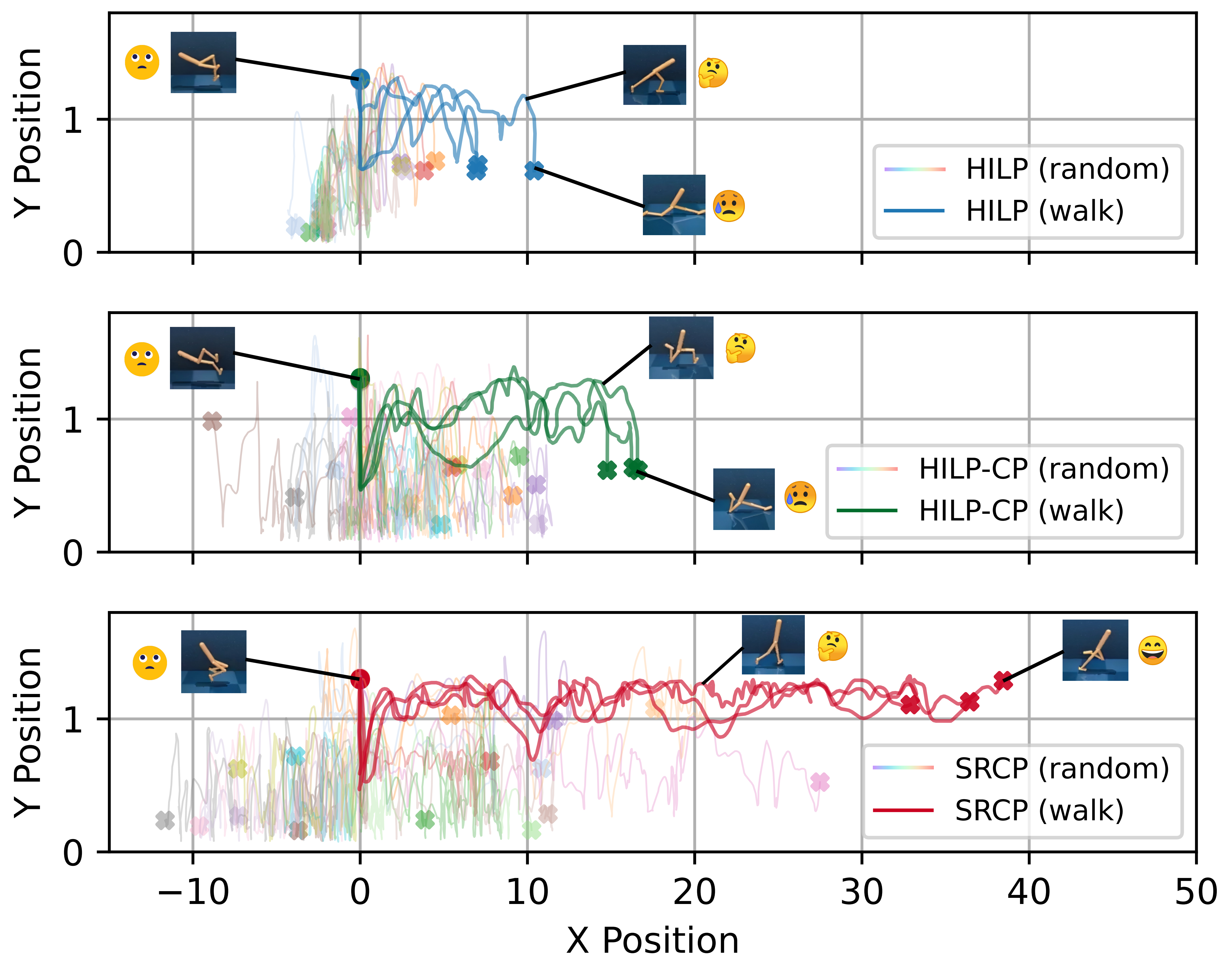} 
\caption{2D trajectories of methods in walker domain.
Solid lines represent trajectories with walking skill, while dashed colored lines are trajectories with random skills.}
\label{figr8}
\vspace{-.5cm}
\end{figure}
To evaluate the effectiveness of the saliency-guided dynamics encoder, we conduct experiments based on the HILP. 
We compare HILP-TACO and HILP-PTACO, which integrate the SOTA contrastive representation learning methods TACO\cite{taco} and Premier-TACO\cite{tacop}, respectively, into HILP.
TACO and Premier-TACO learn representations through temporal contrastive learning, with Premier-TACO proposing a negative example sampling strategy to enhance representation.
We further evaluate two SRCP variants that replace the HILP encoder with our saliency-guided dynamics representation, with and without the consistency policy (denoted as SRCP and SRCP w/o CP).
As Fig.\ref{figr7}, results show that while HILP-TACO and HILP-PTACO improve over HILP in some domains, they underperform in others, indicating that decoupling representation learning from SR objectives with standard representation methods has limited effectiveness.
In contrast, both SRCP w/o CP and SRCP outperform baselines consistently, demonstrating the superior effectiveness of saliency representation for visual URL.

To examine the effect of the consistency policy, we visualize trajectories in the Walker domain with random and walking skills (Fig.\ref{figr8}).
We compare three variants: HILP, HILP-CP (HILP with consistency policy), and SRCP.
HILP shows limited trajectory diversity under random skills and only short-distance walking under the walking skill, reflecting poor skill expressiveness and controllability.
HILP-CP improves trajectory diversity and enables longer walking distances, demonstrating more expressive and controllable behaviors. 
However, its performance remains unstable due to suboptimal representations.
SRCP achieves both multi-modal skills and consistent long-distance locomotion under walking skill, indicating that combining consistency policy with dynamics-relevant representations is essential for modeling multi-modal skill-conditioned action distributions and achieving skill controllability.


\vspace{-.2cm}
\subsection{Can SRCP integrated into other SR methods?}
\begin{figure}[t]
\centering
\includegraphics[width=1.0\columnwidth]{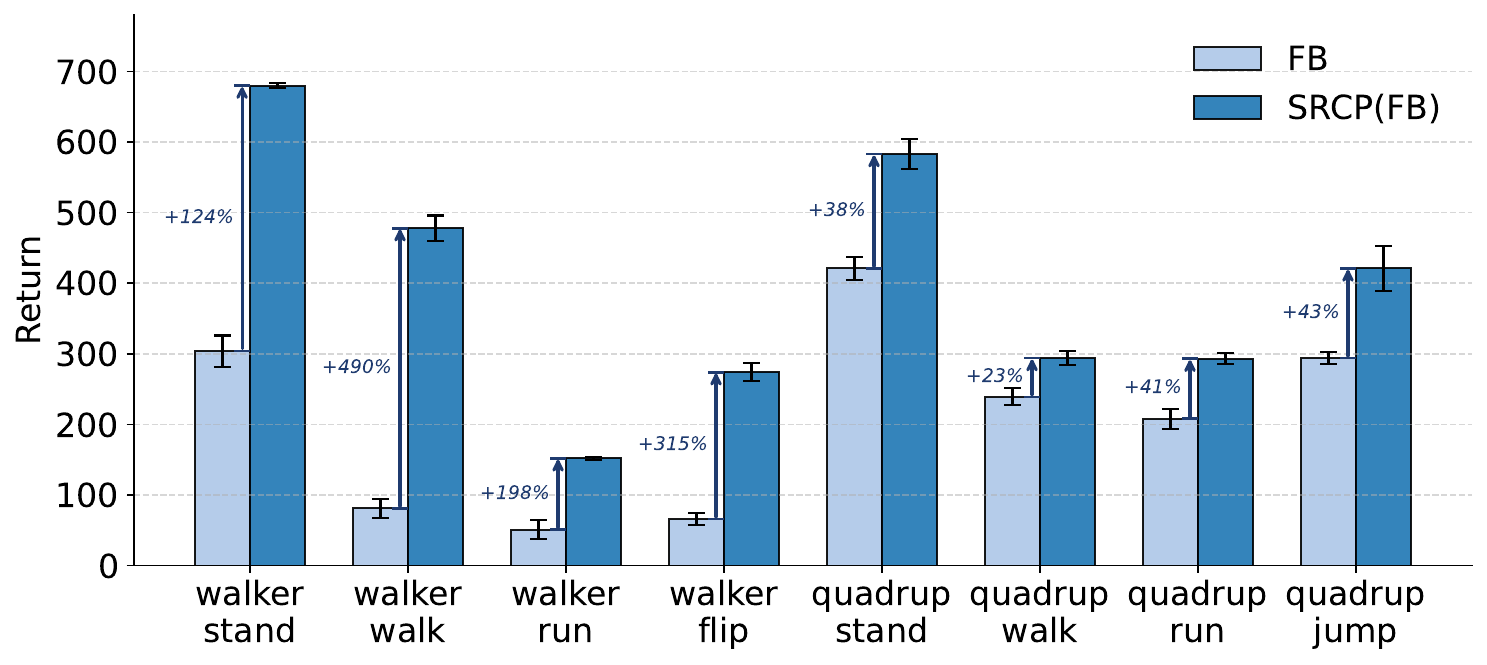} 
\caption{Generalization performance of FB and SRCP(FB).}
\label{figr9}
\vspace{-.4cm}
\end{figure}


To further evaluate the generality of SRCP across different successor representation methods, we compare the performance of FB and SRCP(FB), where FB is adopted as the successor measure learner within the SRCP framework. 
As shown in Fig.\ref{figr9}, we conduct experiments on eight tasks from the walker and quadruped domains in the RND dataset. 
SRCP(FB) consistently outperforms FB in all tasks, benefiting from dynamics-relevant representations, multi-modal action modeling, and enhanced skill controllability introduced by SRCP.
These results demonstrate that SRCP serves as a general and effective framework for improving the generalization capability of SR methods in visual URL.

\vspace{-.05cm}
\subsection{How hyperparameters affect generalization?}
We analyze the sensitivity of SRCP to two key hyperparameters, $\omega$ and $\beta$.
As shown in Table~\ref{table_zeroshot_ablation1}, $\omega$ controls the strength of skill-conditioned guidance. Removing this term ($\omega=0$) leads to a significant drop in performance, while moderate values improve generalization. However, overly large $\omega$ over-constrains policy behavior and slightly degrades performance.
Table~\ref{table_zeroshot_ablation2} examines the effect of $\beta$, which weights the importance of saliency guidance in representation learning. Increasing $\beta$ enhances generalization up to a point, after which excessive focus on salient regions reduces the ability to capture dynamics cues.
Overall, SRCP exhibits stable performance across a wide range of hyperparameters, with balanced settings of $\omega$ and $\beta$ achieving the best trade-off between skill guidance and saliency guidance.

\begin{table}[t]
\vspace{-0.2cm}
\setlength{\tabcolsep}{6pt}
\renewcommand\arraystretch{1.2}
\centering
\resizebox{0.47\textwidth}{!}{
\begin{tabular}{cccccc}
\toprule
Quadruped & $\omega=0$ & $\omega=2$ & $\omega=3$ & $\omega=4$& $\omega=6$ \\
\midrule
Stand    & $370 \pm 38$  & $657 \pm 45$ & ${703} \pm {22}$ & $\textbf{724} \pm \textbf{21}$ & $698 \pm 28$ \\
Walk     & $184 \pm 42$  & $267 \pm 32$ & $\textbf{339} \pm \textbf{27}$ & $337 \pm 27$ & $321 \pm 14$ \\
Run      & $164 \pm 20$  & $289 \pm 13$ & $\textbf{361} \pm \textbf{24}$ & $334 \pm 14$ & $360 \pm 18$ \\
Flip     & $218 \pm 45$  & ${452} \pm {45}$ & $\textbf{535} \pm \textbf{14}$ & $451 \pm 18$ & $442 \pm 55$ \\
\cline{1-6}
Average     & $234$  & $416$ & $\textbf{485}$ & $462$ & $455$ \\
\bottomrule
\end{tabular}}
\caption{Ablation study of parameter $\omega$ in SRCP on Quadruped domain in RND dataset, with 4 random seeds per task.}
\label{table_zeroshot_ablation1}
\vspace{-0.2cm}
\end{table}

\begin{table}[t]
\setlength{\tabcolsep}{6pt}
\renewcommand\arraystretch{1.2}
\centering
\resizebox{0.45\textwidth}{!}{
\begin{tabular}{ccccc}
\toprule
Walker & $\beta=0$ & $\beta=0.2$ & $\beta=0.5$ & $\beta=1$ \\
\midrule
Stand    & $544 \pm 20$  & $550 \pm 26$ & $\textbf{671} \pm \textbf{51}$ & $583 \pm 18$  \\
Walk     & $354 \pm 42$  & $458 \pm 19$ & $\textbf{520} \pm \textbf{32}$ & $484 \pm 48$ \\
Run      & $156 \pm 12$  & $170 \pm 35$ & $\textbf{194} \pm \textbf{38}$ & $169 \pm 23$ \\
Flip     & $310 \pm 15$  & $315 \pm 11$ & $\textbf{369} \pm \textbf{25}$ & $346 \pm 5$ \\
\cline{1-5}
Average     & $341$  & $373$ & $\textbf{439}$ & $396$ \\
\bottomrule
\end{tabular}}
\caption{Ablation study of parameter $\beta$ in SRCP on Walker domain in RND dataset, with 4 random seeds per task.}
\label{table_zeroshot_ablation2}
\vspace{-0.5cm}
\end{table}
\vspace{-.1cm}
\section{Conclusion}
\vspace{-.1cm}
This paper presents a general framework to improve the zero-shot task generalization of SR methods in visual URL. 
We first demonstrate that SR methods often extract suboptimal representations, which impair successor measure and skill-conditioned policy modeling.
To address this issue, we propose the SRCP framework, which capture dynamics-relevant features with saliency-guided dynamics representation learning.
In addition, SRCP introduces a consistency policy tailored for URL tasks, which leverages classifier-free guidance to model multi-modal skills and improve skill controllability.
SRCP is evaluated on 16 visual tasks across 4 datasets and demonstrated superior zero-shot generalization.
Moreover, it is compatible with various SR methods to improve generalization in visual URL.
While SRCP shows strong generalization in offline visual URL, extending SR methods to online settings remains a promising direction. 

{
    \small
    \bibliographystyle{ieeenat_fullname}
    \bibliography{main}
}
\onecolumn
\appendix
\section*{\centering Appendix}

\addcontentsline{toc}{section}{Appendix} 

\section{Full results on the ExORL benchmark}
We evaluate the generalization performance of SRCP and baseline methods across 16 visual continuous control tasks from 4 domains in the ExORL benchmark, using 4 different datasets for pretraining.
All methods are pretrained from raw pixel observations for 500k steps. During evaluation, 10k transitions of task-specific visual-based data are provided to infer the skill vectors for downstream tasks, enabling zero-shot generalization.
Fig.\ref{figa1} (a) summarizes the overall performance of SRCP and baseline methods across all 16 tasks from 4 domains with 4 datasets with 4 seeds. The results show that SRCP achieves SOTA overall performance, demonstrating superior zero-shot generalization ability.
Fig.\ref{figa1} (b) provides a detailed breakdown of performance across 4 domains. The results highlight that SRCP achieves the best or near-best performance in each domains. Furthermore, the comparison between SRCP and HILP highlights the architectural improvements that SRCP brings to SR methods.
\begin{figure*}[h]
\centering
\includegraphics[width=1.0\columnwidth]{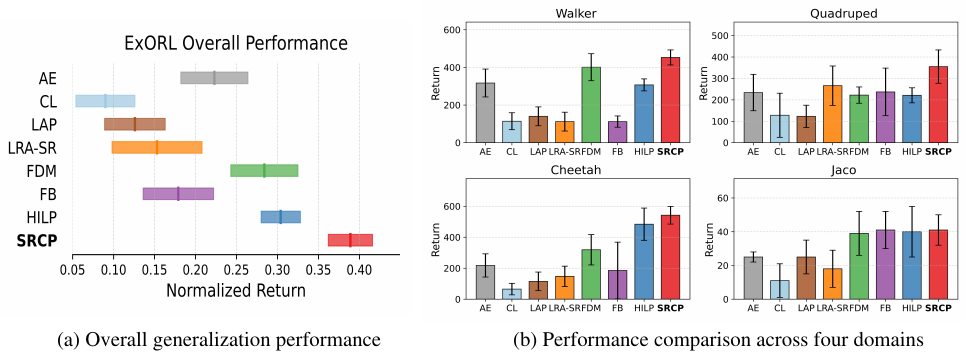} 
\caption{Zero-shot generalization performance on visual tasks.
(a) Overall performance of each method evaluated on 4 datasets, 4 domains, and 4 tasks per domain using 4 random seeds (i.e., 256 values in total).
(b) Performance in 4 domains.}
\label{figa1}
\vspace{-.3cm}
\end{figure*}

\clearpage
Fig.\ref{figr2a} illustrates the comparative performance of the SRCP method against baseline approaches across all tasks. 
Notably, SRCP consistently attains either superior or near-optimal performance across all tasks, highlighting its effectiveness across a diverse range of tasks.
The complete results (unnormalized returns) are reported in Table~\ref{table_zeroshot_new_all}, with all methods evaluated using 4 random seeds.

\begin{figure}[h]
\centering
\includegraphics[width=1.0\columnwidth]{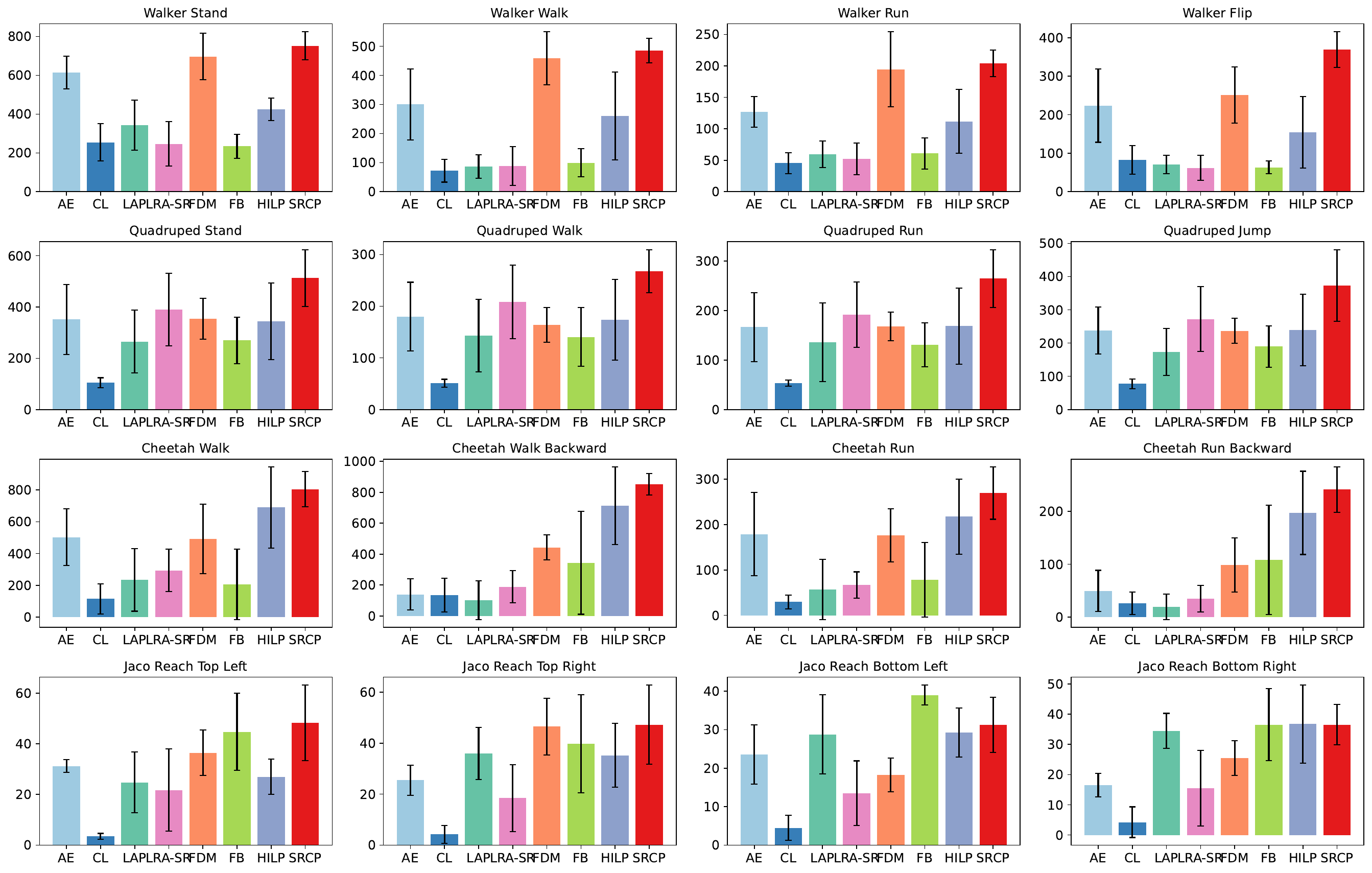} 
\caption{Experimental results on the pixel-based ExORL benchmark for each task, aggregated over four datasets and four random seeds (i.e., 16 runs in total).}
\label{figr2a}
\end{figure}

\begin{table*}[h]
\vspace{-0.4cm}
\caption{Zero-shot Performance across Multiple Domains}
\setlength{\tabcolsep}{5pt}
\renewcommand\arraystretch{1.15}
\centering
\resizebox{0.7\textwidth}{!}{
\begin{tabular}{cccccccccccc}
\toprule
Dataset & Domain & Task & AE & CL & LAP  & LRA-SR  & FDM & FB &HILP &  SRCP \\
\midrule
\multirow{20}{*}{RND}
& \multirow{5}{*}{Walker} 
& Stand &  $546 \pm 30$ &  $144 \pm 4$ & $296 \pm 35$ & $213 \pm 49$  & $557 \pm 99$ & $135 \pm 12$ & $386 \pm 98$ & $671 \pm 51$ \\
&  & Walk  &  $358 \pm 12$ & $36 \pm 1$ & $49 \pm 5$ & $147 \pm 16$ & $452 \pm 52$ & $48 \pm 9$  & $326 \pm 31$ & $520 \pm 32$ \\
&  & Run   &  $125 \pm 7$ & $25 \pm 2$  & $43 \pm 2$ & $82 \pm 3$ & $146 \pm 60$ & $31 \pm 10$ & $121 \pm 20$ & $194 \pm 38$ \\
&  & Flip  &  $265 \pm 14$ & $50 \pm 3$ & $35 \pm 2$ & $101 \pm 14$ & $282 \pm 52$ & $36 \pm 15$ & $92 \pm 35$ & $369 \pm 25$ \\
\cline{3-11}
&  & Average  & $324$ & $64$ & $106$ & $147$ & \underline{$359$} & $63$ & $231$ & $\textbf{439}$ \\
\cline{2-11}
& \multirow{5}{*}{Quadruped}
& Stand & $467 \pm 31$ & $114 \pm 12$ & $446 \pm 18$ & $476 \pm 37$ & $374 \pm 85$ & $257 \pm 42$ & $455 \pm 84$ & $703 \pm 22$ \\
&  & Walk  & $248 \pm 21$ & $58 \pm 4$ & $252 \pm 26$  & $274 \pm 4$ & $199 \pm 63$ & $104 \pm 12$ & $254 \pm 31$ & $339 \pm 27$ \\
&  & Run   & $228 \pm 30$ & $59 \pm 7$ & $252 \pm 35$ & $239 \pm 18$ & $192 \pm 29$ & $105 \pm 36$ & $241 \pm 12$ & $361 \pm 24$ \\
&  & Jump  & $274 \pm 55$ & $88 \pm 12$ & $270 \pm 11$ & $340 \pm 31$ & $273 \pm 66$ & $164 \pm 22$ & $269 \pm 28$ & $535 \pm 14$ \\
\cline{3-11}
&  & Average & $304$ & $80$ & $305$ & \underline{$332$} & $260$ & $158$ & $305$ & $\textbf{485}$ \\
\cline{2-11}
& \multirow{5}{*}{Cheetah}
& Walk & $658 \pm 98$ & $275 \pm 10$ & $181 \pm 15$ & $445 \pm 95$ & $470 \pm 182$ & $559 \pm 102$ & $895 \pm 33$ & $859 \pm 34$ \\
&  & Walk Backward & $5 \pm 2$ & $100 \pm 6$ & $25 \pm 9$ & $196 \pm 8$ & $441 \pm 107$ & $803 \pm 58$ & $927 \pm 35$ & $934 \pm 45$ \\
&  & Run & $258 \pm 16$ & $51 \pm 8$ & $40 \pm 7$ & $106 \pm 33$ & $178 \pm 41$ & $211 \pm 34$ & $276 \pm 46$ & $322 \pm 44$ \\
&  & Run Backward & $3 \pm 3$ & $19 \pm 3$ & $5 \pm 1$ & $24 \pm 2$ & $126 \pm 9$ & $243 \pm 18$ & $297 \pm 46$ & $293 \pm 42$ \\
\cline{3-11}
&  & Average & $231$ & $111$ & $63$ & $193$ & $304$ & $454$ & \underline{$599$} & $\textbf{602}$ \\
\cline{2-11}
& \multirow{5}{*}{Jaco}
& Reach Top Left & $28 \pm 3$ & $5 \pm 3$ & $40 \pm 8$ & $38 \pm 4$ & $40 \pm 16$ & $45 \pm 14$ & $23 \pm 9$ & $53 \pm 9$ \\
&  & Reach Top Right & $20 \pm 7$ & $10 \pm 4$ & $24 \pm 5$ & $20 \pm 4$ & $38 \pm 21$ & $70 \pm 10$ & $39 \pm 5$ & $60 \pm 7$ \\
&  & Reach Bottom Left & $22 \pm 4$ & $10 \pm 2$ & $43 \pm 4$ & $21 \pm 11$ & $20 \pm 14$ & $63 \pm 51$ & $28 \pm 6$ & $43 \pm 8$ \\
&  & Reach Bottom Right & $22 \pm 4$ & $13 \pm 3$ & $35 \pm 5$ & $33 \pm 9$ & $16 \pm 8$ & $51 \pm 9$ & $47 \pm 5$ & $42 \pm 8$ \\
\cline{3-11}
&  & Average & $23$ & $10$ & $36$ & $28$ & $29$ & $\textbf{52}$ & $34$ & \underline{$50$} \\
\midrule
\multirow{20}{*}{Proto}
& \multirow{5}{*}{Walker} 
& Stand & $677 \pm 18$ & $316 \pm 3$ & $389 \pm 7$ & $156 \pm 5$ & $854 \pm 46$ & $241 \pm 32$ & $365 \pm 33$ & $830 \pm 39$ \\
&  & Walk  & $216 \pm 38$ & $125 \pm 2$ & $78 \pm 3$ & $28 \pm 2$ & $563 \pm 268$ & $92 \pm 7$ & $117 \pm 12$ & $504 \pm 66$ \\
&  & Run   & $122 \pm 5$ & $60 \pm 1$ & $67 \pm 1$ & $26 \pm 1$ & $267 \pm 44$ & $48 \pm 6$ & $61 \pm 8$ & $227 \pm 16$ \\
&  & Flip  & $155 \pm 27$ & $132 \pm 4$ & $75 \pm 2$ & $29 \pm 2$ & $212 \pm 65$ & $74 \pm 6$ & $80 \pm 15$ & $428 \pm 75$ \\
\cline{3-11}
&  & Average & $293$ & $158$ & $152$ & $60$ & \underline{$474$} & $114$ & $156$ & $\textbf{497}$ \\
\cline{2-11}
& \multirow{5}{*}{Quadruped}
& Stand & $120 \pm 7$ & $112 \pm 9$ & $138 \pm 10$ & $163 \pm 18$ & $226 \pm 65$ & $193 \pm 14$ & $86 \pm 8$ & $455 \pm 21$ \\
&  & Walk  & $70 \pm 6$ & $56 \pm 3$ & $68 \pm 6$ & $91 \pm 9$ & $108 \pm 39$ & $110 \pm 11$ & $45 \pm 9$ & $234 \pm 3$ \\
&  & Run   & $48 \pm 5$ & $54 \pm 6$ & $52 \pm 4$ & $80 \pm 12$ & $120 \pm 21$ & $108 \pm 12$ & $39 \pm 5$ & $230 \pm 18$ \\
&  & Jump  & $116 \pm 8$ & $71 \pm 3$ & $89 \pm 10$ & $104 \pm 11$ & $182 \pm 22$ & $129 \pm 16$ & $67 \pm 11$ & $238 \pm 68$ \\
\cline{3-11}
&  & Average & $89$ & $73$ & $87$ & $110$ & \underline{$159$} & $135$ & $59$ & $\textbf{289}$ \\
\cline{2-11}
& \multirow{5}{*}{Cheetah}
& Walk & $206 \pm 49$ & $46 \pm 4$ & $74 \pm 41$ & $109 \pm 28$ & $150 \pm 126$ & $20 \pm 21$ & $351 \pm 67$ & $624 \pm 44$ \\
&  & Walk Backward & $117 \pm 24$ & $56 \pm 3$ & $25 \pm 9$ & $59 \pm 11$ & $384 \pm 232$ & $8 \pm 5$ & $692 \pm 26$ & $856 \pm 44$ \\
&  & Run   & $30 \pm 6$ & $7 \pm 1$ & $5 \pm 1$ & $24 \pm 3$ & $87 \pm 67$ & $6 \pm 5$ & $157 \pm 22$ & $174 \pm 43$ \\
&  & Run Backward & $21 \pm 2$ & $10 \pm 1$ & $5 \pm 2$ & $10 \pm 2$ & $41 \pm 22$ & $7 \pm 3$ & $100 \pm 15$ & $208 \pm 12$ \\
\cline{3-11}
&  & Average & $94$ & $30$ & $27$ & $51$ & $166$ & $10$ & \underline{$325$} & $\textbf{466}$ \\
\cline{2-11}
& \multirow{5}{*}{Jaco}
& Reach Top Left & $30 \pm 9$ & $4 \pm 2$ & $32 \pm 8$ & $5 \pm 2$ & $43 \pm 25$ & $47 \pm 12$ & $39 \pm 5$ & $43 \pm 11$ \\
&  & Reach Top Right & $34 \pm 6$ & $2 \pm 1$ & $50 \pm 10$ & $4 \pm 1$ & $48 \pm 21$ & $43 \pm 9$ & $44 \pm 6$ & $57 \pm 5$ \\
&  & Reach Bottom Left & $33 \pm 8$ & $3 \pm 3$ & $34 \pm 6$ & $2 \pm 1$ & $24 \pm 22$ & $38 \pm 11$ & $24 \pm 5$ & $30 \pm 10$ \\
&  & Reach Bottom Right & $18 \pm 7$ & $1 \pm 1$ & $40 \pm 4$ & $1 \pm 1$ & $26 \pm 21$ & $43 \pm 4$ & $29 \pm 4$ & $44 \pm 19$ \\
\cline{3-11}
&  & Average & $29$ & $3$ & $39$ & $3$ & $35$ & $\underline{43}$ & $34$ & $\textbf{44}$ \\
\midrule
\multirow{20}{*}{APS}
& \multirow{5}{*}{Walker} 
& Stand & $516 \pm 12$ & $377 \pm 17$ & $519 \pm 38$ & $441 \pm 13$ & $608 \pm 90$ & $304 \pm 22$ & $429 \pm 26$ & $689 \pm 24$ \\
&  & Walk  & $158 \pm 13$ & $95 \pm 1$ & $155 \pm 20$ & $100 \pm 4$ & $317 \pm 156$ & $81 \pm 4$ & $120 \pm 19$ & $413 \pm 27$ \\
&  & Run   & $97 \pm 5$ & $64 \pm 3$ & $91 \pm 5$ & $72 \pm 1$ & $126 \pm 26$ & $51 \pm 13$ & $74 \pm 14$ & $174 \pm 18$ \\
&  & Flip  & $116 \pm 8$ & $105 \pm 4$ & $103 \pm 10$ & $87 \pm 5$ & $158 \pm 25$ & $66 \pm 8$ & $133 \pm 12$ & $299 \pm 39$ \\
\cline{3-11}
&  & Average & $222$ & $160$ & $217$ & $175$ & \underline{$303$} & $126$ & $189$ & $\textbf{394}$ \\
\cline{2-11}
& \multirow{5}{*}{Quadruped}
& Stand & $426 \pm 18$ & $72 \pm 5$ & $310 \pm 80$ & $533 \pm 37$ & $444 \pm 16$ & $421 \pm 16$ & $415 \pm 44$ & $471 \pm 27$ \\
&  & Walk  & $206 \pm 14$ & $38 \pm 5$ & $154 \pm 38$ & $254 \pm 11$ & $176 \pm 25$ & $239 \pm 12$ & $197 \pm 18$ & $257 \pm 14$ \\
&  & Run   & $195 \pm 12$ & $43 \pm 2$ & $165 \pm 57$ & $243 \pm 27$ & $172 \pm 27$ & $208 \pm 14$ & $193 \pm 19$ & $260 \pm 35$ \\
&  & Jump  & $268 \pm 17$ & $57 \pm 5$ & $210 \pm 26$ & $323 \pm 20$ & $221 \pm 23$ & $294 \pm 9$ & $256 \pm 23$ & $389 \pm 61$ \\
\cline{3-11}
&  & Average & $274$ & $53$ & $210$ & \underline{$338$} & $253$ & $318$ & $265$ & $\textbf{344}$ \\
\cline{2-11}
& \multirow{5}{*}{Cheetah}
& Walk & $625 \pm 44$ & $52 \pm 7$ & $570$ & $226 \pm 9$ & $613 \pm 77$ & $12 \pm 2$ & $273 \pm 383$ & $821 \pm 122$ \\
&  & Walk Backward & $288 \pm 126$ & $63 \pm 7$ & $37$ & $155 \pm 39$ & $371 \pm 307$ & $48 \pm 7$ & $967 \pm 8$ & $742 \pm 51$ \\
&  & Run   & $179 \pm 31$ & $30 \pm 5$ & $170$ & $68 \pm 3$ & $189 \pm 41$ & $13 \pm 5$ & $118 \pm 113$ & $275 \pm 11$ \\
&  & Run Backward & $75 \pm 7$ & $14 \pm 1$ & $6$ & $29 \pm 4$ & $59 \pm 73$ & $9 \pm 4$ & $248 \pm 46$ & $191 \pm 61$ \\
\cline{3-11}
&  & Average & $292$ & $40$ & $196$ & $120$ & $308$ & $21$ & \underline{$402$} & $\textbf{507}$ \\
\cline{2-11}
& \multirow{5}{*}{Jaco}
& Reach Top Left & $35 \pm 6$ & $2 \pm 1$ & $9 \pm 1$ & $6 \pm 2$ & $21 \pm 8$ & $22 \pm 6$ & $22 \pm 9$ & $28 \pm 12$ \\
&  & Reach Top Right & $28 \pm 15$ & $4 \pm 2$ & $29 \pm 5$ & $11 \pm 1$ & $36 \pm 13$ & $23 \pm 4$ & $14 \pm 5$ & $21 \pm 5$ \\
&  & Reach Bottom Left & $27 \pm 4$ & $2 \pm 1$ & $20 \pm 4$ & $22 \pm 5$ & $12 \pm 2$ & $36 \pm 13$ & $40 \pm 7$ & $28 \pm 2$ \\
&  & Reach Bottom Right & $12 \pm 5$ & $2 \pm 1$ & $38 \pm 1$ & $7 \pm 2$ & $29 \pm 18$ & $19 \pm 6$ & $63 \pm 11$ & $32 \pm 11$ \\
\cline{3-11}
&  & Average & $26$ & $3$ & $24$ & $12$ & $25$ & $25$ & $\textbf{35}$ & \underline{$27$} \\
\midrule
\multirow{20}{*}{APT}
& \multirow{5}{*}{Walker} 
& Stand & $718 \pm 78$ & $182 \pm 14$ & $169 \pm 8$ & $174 \pm 6$ & $768 \pm 110$ & $256 \pm 48$ & $517 \pm 16$ & $818 \pm 37$ \\
&  & Walk  & $470 \pm 20$ & $33 \pm 1$ & $66 \pm 3$ & $32 \pm 1$ & $504 \pm 79$ & $178 \pm 37$ & $477 \pm 21$ & $505 \pm 91$ \\
&  & Run   & $165 \pm 15$ & $33 \pm 3$ & $37 \pm 3$ & $28 \pm 2$ & $239 \pm 34$ & $67 \pm 6$ & $191 \pm 14$ & $220 \pm 29$ \\
&  & Flip  & $359 \pm 17$ & $44 \pm 4$ & $71 \pm 4$ & $31 \pm 2$ & $353 \pm 37$ & $77 \pm 8$ & $311 \pm 11$ & $381 \pm 55$ \\
\cline{3-11}
&  & Average & $428$ & $73$ & $86$ & $66$ & \underline{$466$} & $145$ & $374$ & $\textbf{481}$ \\
\cline{2-11}
& \multirow{5}{*}{Quadruped}
& Stand & $392 \pm 34$ & $121 \pm 5$ & $167 \pm 5$ & $389 \pm 35$ & $375 \pm 112$ & $207 \pm 25$ & $420 \pm 37$ & $425 \pm 91$ \\
&  & Walk  & $196 \pm 19$ & $51 \pm 5$ & $99 \pm 22$ & $214 \pm 7$ & $173 \pm 36$ & $109 \pm 9$ & $199 \pm 12$ & $241 \pm 49$ \\
&  & Run   & $195 \pm 35$ & $57 \pm 4$ & $74 \pm 3$ & $204 \pm 9$ & $188 \pm 38$ & $103 \pm 8$ & $201 \pm 14$ & $207 \pm 10$ \\
&  & Jump  & $293 \pm 47$ & $93 \pm 2$ & $124 \pm 8$ & $322 \pm 22$ & $270 \pm 40$ & $171 \pm 12$ & $364 \pm 25$ & $333 \pm 91$ \\
\cline{3-11}
&  & Average & $269$ & $81$ & $116$ & $282$ & $252$ & $148$ & \underline{$296$} & $\textbf{302}$ \\
\cline{2-11}
& \multirow{5}{*}{Cheetah}
& Walk & $524 \pm 42$ & $86 \pm 3$ & $114 \pm 15$ & $395 \pm 84$ & $737 \pm 54$ & $234 \pm 63$ & $899 \pm 55$ & $916 \pm 44$ \\
&  & Walk Backward & $147 \pm 81$ & $321 \pm 36$ & $320 \pm 44$ & $346 \pm 40$ & $578 \pm 137$ & $519 \pm 79$ & $604 \pm 41$ & $878 \pm 121$ \\
&  & Run   & $249 \pm 62$ & $30 \pm 1$ & $13 \pm 6$ & $69 \pm 16$ & $251 \pm 18$ & $84 \pm 17$ & $319 \pm 32$ & $306 \pm 11$ \\
&  & Run Backward & $99 \pm 60$ & $63 \pm 2$ & $61 \pm 17$ & $76 \pm 9$ & $169 \pm 109$ & $174 \pm 42$ & $144 \pm 23$ & $273 \pm 8$ \\
\cline{3-11}
&  & Average & $255$ & $125$ & $127$ & $222$ & $434$ & $253$ & \underline{$492$} & $\textbf{593}$ \\
\cline{2-11}
& \multirow{5}{*}{Jaco}
& Reach Top Left & $32 \pm 15$ & $3 \pm 1$ & $18 \pm 1$ & $38 \pm 10$ & $42 \pm 8$ & $65 \pm 19$ & $24 \pm 10$ & $69 \pm 13$ \\
&  & Reach Top Right & $20 \pm 3$ & $1 \pm 1$ & $41 \pm 4$ & $39 \pm 11$ & $64 \pm 27$ & $23 \pm 6$ & $44 \pm 8$ & $51 \pm 10$ \\
&  & Reach Bottom Left & $12 \pm 4$ & $3 \pm 1$ & $18 \pm 1$ & $9 \pm 1$ & $17 \pm 9$ & $39 \pm 6$ & $25 \pm 5$ & $24 \pm 3$ \\
&  & Reach Bottom Right & $14 \pm 4$ & $1 \pm 1$ & $25 \pm 7$ & $21 \pm 7$ & $31 \pm 32$ & $33 \pm 6$ & $27 \pm 4$ & $28 \pm 3$ \\
\cline{3-11}
&  & Average & $20$ & $2$ & $26$ & $27$ & $39$ & \underline{$40$} & $30$ & {$\textbf{43}$} \\
\bottomrule
\end{tabular}}
\label{table_zeroshot_new_all}
\vspace{-0.2cm}
\end{table*}

\clearpage
\section{Pseudo-code}
We present the pseudocode for training SRCP in the offline visual unsupervised reinforcement learning setting in Algorithm \ref{algo1}. Specifically, the procedure illustrates how SRCP incorporates the HILP method to learn skill-conditioned representations during the pretraining phase.
\begin{algorithm}[H]
\caption{SRCP Algorithm}
\begin{algorithmic}[1]
    \STATE \textbf{Inputs:} pre-collected dataset $\mathcal{D}$, randomly initialized representation network $f_\theta$, basic feature networks $\varphi_\nu$, successor feature network $\psi_\kappa$, actor network $\pi_\zeta$ , learning rate $\eta$, mini-batch size $b$, number of gradient updates step $M$, number of gradient updates every step $N$, skill update period $T$, temperature $\tau$.
    \FOR{$m = 1$ to $M$}
    \FOR{$n = 1$ to $N$}
        \STATE Sample a mini-batch of transitions $\{(o_i, a_i, o_{i+1})\}_{i \in I} \subset \mathcal{D}$ of size $|I|=b$. 
        \STATE Sample a mini-batch of target state-action pairs $\{(o'_i, a'_i)\}_{i \in I} \subset \mathcal{D}$ of size $|I|=b$.
        \STATE Sample a mini-batch of $\{z_i\}_{i \in I} \sim p(z)$ of size $|I|=b$.
        \STATE \textbf{/* Generate saliency maps */}
        \STATE Generate saliency maps $o_\alpha$ from observation $o$.
        \STATE \textbf{/* Upadate encoder */}
        \STATE Compute saliency dynamics representation learning objective $L(\theta)$ with equation (5).
        \STATE Update $\theta \gets \theta - \eta \nabla L(\theta)$.
        \STATE \textbf{/* Upadate basic feature and successor feature */}
        \STATE Compute HILP basic feature objective $L(\nu)$ with: \\
        $\mathcal{L}_{\nu} = \sum_{i=1}^{2} \mathbb{E}_{(f(o), a, f(o'), g)} \left[\left( r + \gamma (1 - r) \cdot V'_i(f(o'), g) - V_i(f(o), g) \right)^2 \right]$
        \STATE Compute successor measure loss with: \\
        $\mathcal{L}_{\kappa} = ||\varphi(f(o'))z+\gamma \psi(f(o'),a',z)z - \psi(f(o),a,z)z||^2$
        \STATE Update $\nu, \kappa \gets \nu, \kappa - \eta \nabla L(\nu) - \eta \nabla L(\kappa)$.
        \STATE \textbf{/* Upadate policy */}
        \STATE Compute actor loss function $L(\zeta)$ with equation (11).
        \STATE Update $\zeta \gets \zeta - \eta \nabla L(\zeta)$.
        \ENDFOR
        
        \STATE \textbf{/* Update target network parameters */}
        \STATE $\kappa^- \gets \tau \kappa + (1-\tau)\kappa^-$
    \ENDFOR
\end{algorithmic}
\label{algo1}
\end{algorithm}

\clearpage
\section{Theoretical analysis}
\begin{proposition}[{\cite[Prop.~16]{fb}}]
\label{prop1}
Let \( \tau: S \times A \to G \) be a mapping into a goal space \(G\).
Let \( \pi \) be a policy, and let \( M^\pi \) denote its successor state measure in goal space \(G\), as defined in (10).  
Assume that \( M^\pi(s,a,\cdot) \) is absolutely continuous with respect to a positive measure \( \rho \) on \(G\), and let \( m^\pi \) be its Radon--Nikodym derivative.  
For any measurable function \( r : G \to \mathbb{R} \), define the reward
\(R(s,a) := r(\tau(s,a)).\)
Then the \(Q\)-function of policy \( \pi \) under reward \(R\) satisfies
\begin{equation}
\begin{split}
Q^\pi(s,a)
&= \int_G r(g)\, M^\pi(s,a,\mathrm{d}g) \\
&= \int_G r(g)\, m^\pi(s,a,g)\, \rho(\mathrm{d}g).
\end{split}
\end{equation}
\end{proposition}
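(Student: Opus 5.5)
The plan is to unfold the definition of the successor state measure in goal space and exchange sum and integral. The definition referred to as (10), following \cite{fb}, is
\[
M^\pi(s,a,X) := \sum_{t\ge 0}\gamma^t\,\Pr\bigl(\tau(s_{t},a_{t})\in X \mid s_0=s,a_0=a,\pi\bigr)
\]
for measurable $X\subseteq G$. Equivalently, $M^\pi(s,a,\cdot)=\sum_t \gamma^t \mu_t^{s,a}$, where $\mu_t^{s,a}$ is the law of $\tau(s_t,a_t)$ under the process started at $(s,a)$ and following $\pi$. If the paper's convention starts the sum at $t=1$, or uses $s_{t+1}$, the argument is unchanged.

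\textbf{First equality.} I would begin with the $Q$-function:
\[
Q^\pi(s,a)=\mathbb{E}\Bigl[\sum_{t\ge0}\gamma^t R(s_t,a_t)\Bigr]=\mathbb{E}\Bigl[\sum_t\gamma^t r(\tau(s_t,a_t))\Bigr].
\]
Assuming for now that $r\ge 0$, Tonelli lets me swap the expectation and the sum. The change-of-variables formula for image measures then gives $\mathbb{E}[r(\tau(s_t,a_t))]=\int_G r\,d\mu_t^{s,a}$. Summing over $t$ yields $\int_G r\,dM^\pi(s,a,\cdot)$, because integration against a countable nonnegative combination of measures is linear and, for nonnegative integrands, commutes with the sum (monotone convergence). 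To be careful, I would first verify this on indicators $r=\mathbf 1_X$, where it is exactly the definition of $M^\pi$. I would then extend to simple functions by linearity and to nonnegative measurable $r$ by monotone convergence.

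\textbf{General $r$.} For a signed $r$, I would write $r=r^+-r^-$ and apply the previous step to each part. This requires $Q^\pi$ to be well defined, say $r$ bounded, or more generally $\int|r|\,dM^\pi<\infty$. In the bounded case, $M^\pi(s,a,G)=1/(1-\gamma)$ is finite, so everything is integrable. I would state this integrability as the implicit standing assumption that makes $Q^\pi$ meaningful.

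\textbf{Second equality.} This is immediate from the Radon--Nikodym theorem. Absolute continuity $M^\pi(s,a,\cdot)\ll\rho$ gives a density $m^\pi(s,a,\cdot)$ with $M^\pi(s,a,dg)=m^\pi(s,a,g)\rho(dg)$. The standard fact $\int f\,d\nu=\int f\,\frac{d\nu}{d\rho}\,d\rho$ holds for nonnegative $f$ by the same indicator, simple-function, monotone-convergence chain, and for integrable $f$ by splitting into positive and negative parts. The theorem needs $\rho$ to be $\sigma$-finite, which I would assume implicitly as part of ``positive measure''.

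The main obstacle is not conceptual but bookkeeping. One needs measurability of $\tau$ so that the pushforward is defined, and the interchange of $\sum_t$ and $\mathbb{E}$ must be justified. Both are handled by working first with nonnegative $r$ and invoking Tonelli, with integrability invoked only at the final signed step.
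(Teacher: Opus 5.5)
Your proposal is correct and follows the same route as the paper: write $M^\pi(s,a,\cdot)=\sum_{t\ge0}\gamma^t P_t^\pi(s,a,\cdot)$ with $P_t^\pi$ the law of $\tau(s_t,a_t)$, express $Q^\pi(s,a)$ as $\sum_t\gamma^t\int_G r\,dP_t^\pi$, recombine the sum into a single integral against $M^\pi$, and read off the density form from the Radon--Nikodym derivative. The only difference is that you justify steps the paper performs silently: the sum--integral interchange (Tonelli and monotone convergence for $r\ge 0$, then $r=r^+-r^-$ under an integrability assumption such as bounded $r$), and the $\sigma$-finiteness of $\rho$ needed for $m^\pi$ to exist.
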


\begin{proof}
For each time \( t \ge 0 \), let \( P_t^\pi(s_0,a_0,\mathrm{d}g) \) denote the distribution of 
\(
g = \tau(s_t,a_t)
\)
under trajectories generated by policy \( \pi \) starting from \( (s_0,a_0) \).
By definition of the successor state measure:
\begin{equation}
M^\pi(s,a,\mathrm{d}g)
= \sum_{t\ge0} \gamma^t\, P_t^\pi(s,a,\mathrm{d}g).
\end{equation}
By definition of the density \( m^\pi \), the \(Q\)-function of \( \pi \) under reward \(R\) satisfies:
\begin{equation}
\begin{split}
Q^\pi(s,a)
&= \sum_{t\ge0} \gamma^t \mathbb{E}[R(s_t,a_t)\mid s_0=s,a_0=a,\pi]  \\
&= \sum_{t\ge0} \gamma^t \mathbb{E}[r(\tau(s_t,a_t))\mid s_0=s,a_0=a,\pi] \\
&= \sum_{t\ge0} \gamma^t \int_G r(g)\, P_t^\pi(s,a,\mathrm{d}g) \\
&= \int_G r(g)\, M^\pi(s,a,\mathrm{d}g).
\end{split}
\end{equation}
\end{proof}

\begin{proposition}[{\cite[Prop.~18]{fb}}]
\label{prop2}
Let \( f: S \times A \to \mathbb{R} \) be an arbitrary function, and define the policy \( \pi_f \) by \( \pi_f(s) := \arg\max_a f(s, a) \). 
Let \( r: S \times A \to \mathbb{R} \) be a bounded reward function, \( Q^\star \) the optimal \( Q \)-function for \( r \), and \( Q^{\pi_f} \) the \( Q \)-function of policy \( \pi_f \) under reward \( r \). Then:
\begin{equation}
    \sup_{S \times A} |f - Q^\star| \leq \frac{2}{1 - \gamma} \sup_{S \times A} |f - Q^{\pi_f}|,
\end{equation}
and
\begin{equation}
\sup_{S \times A} |Q^{\pi_f} - Q^\star| \leq \frac{3}{1 - \gamma} \sup_{S \times A} |f - Q^{\pi_f}|. \label{eq:prop18-2}
\end{equation}
\end{proposition}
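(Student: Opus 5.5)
Let $\varepsilon := \sup_{S\times A}|f - Q^{\pi_f}|$ and write $V^\star(s)=\max_a Q^\star(s,a)$. The plan is to prove the first inequality by a contraction argument through the optimal Bellman operator, and then deduce the second by the triangle inequality.

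\textbf{First inequality.} The key observation is that $\pi_f$ is greedy with respect to $f$. Hence the Bellman evaluation operator $T^{\pi_f}$ agrees with the optimal operator $T^\star$ when applied to $f$:
\[
(T^{\pi_f} f)(s,a) = r(s,a) + \gamma\,\mathbb{E}_{s'}\bigl[f(s',\pi_f(s'))\bigr] = r(s,a) + \gamma\,\mathbb{E}_{s'}\bigl[\max_{a'} f(s',a')\bigr] = (T^\star f)(s,a).
\]
Since $Q^{\pi_f} = T^{\pi_f} Q^{\pi_f}$ and $T^{\pi_f}$ is a $\gamma$-contraction in sup norm, we get
\[
\|T^\star f - f\|_\infty = \|T^{\pi_f} f - f\|_\infty \le \|T^{\pi_f} f - T^{\pi_f} Q^{\pi_f}\|_\infty + \|Q^{\pi_f} - f\|_\infty \le (1+\gamma)\varepsilon .
\]
Next I use that $T^\star$ is a $\gamma$-contraction with fixed point $Q^\star$:
\[
\|f - Q^\star\|_\infty \le \|f - T^\star f\|_\infty + \|T^\star f - T^\star Q^\star\|_\infty \le (1+\gamma)\varepsilon + \gamma\|f-Q^\star\|_\infty .
\]
This gives $\|f-Q^\star\|_\infty \le \frac{1+\gamma}{1-\gamma}\varepsilon \le \frac{2}{1-\gamma}\varepsilon$, which is the first claim.

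\textbf{Second inequality.} By the triangle inequality,
\[
\|Q^{\pi_f}-Q^\star\|_\infty \le \|Q^{\pi_f}-f\|_\infty + \|f-Q^\star\|_\infty \le \varepsilon + \tfrac{2}{1-\gamma}\varepsilon \le \tfrac{3}{1-\gamma}\varepsilon,
\]
using $1 \le 1/(1-\gamma)$.

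\textbf{Main obstacle.} The delicate point is well-posedness rather than the estimates. Boundedness of $r$ ensures that $Q^\star$ and $Q^{\pi_f}$ are bounded, so the sup norms above are finite. If $f$ is unbounded, then $\varepsilon=\infty$ and the statement holds trivially. I would also assume that the argmax defining $\pi_f$ is attained, for example with a measurable selection when $A$ is compact; otherwise an $\eta$-greedy policy can be used and $\eta\to 0$ taken at the end. With these conventions, the contraction steps are routine.
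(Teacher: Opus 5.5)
Your proof is correct and is essentially the paper's argument. Your Bellman residual $T^\star f - f$ is exactly the paper's $\varepsilon' = \varepsilon - \gamma\,\mathbb{E}_{s'}\varepsilon(s',\pi_f(s'))$, bounded by $(1+\gamma)\sup|\varepsilon| \le 2\sup|\varepsilon|$ in the same way. Your contraction step $\|f-Q^\star\|_\infty \le \|T^\star f - f\|_\infty/(1-\gamma)$ is a repackaging of the paper's observation that $f$ is the optimal $Q$-function for the perturbed reward $r-\varepsilon'$, and the second inequality follows by the same triangle inequality in both.
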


\begin{proof}
We restate the proof here for completeness.  
Define the approximation error
\[
\varepsilon(s, a) := Q^{\pi_f}(s, a) - f(s, a).
\]
The \( Q \)-function \( Q^{\pi_f} \) satisfies the Bellman equation
\[
Q^{\pi_f}(s, a) = r(s, a) + \gamma \mathbb{E}_{s' | (s, a)} Q^{\pi_f}(s', \pi_f(s')).
\]
Substituting \( Q^{\pi_f} = f + \varepsilon \), we obtain
\begin{equation}
\begin{split}
f(s, a) &= r(s, a) - \varepsilon(s, a) 
+ \gamma \mathbb{E}_{s'|(s, a)} \big[f(s', \pi_f(s')) + \varepsilon(s', \pi_f(s'))\big] \\
&= r(s, a) - \varepsilon'(s, a) + \gamma \mathbb{E}_{s'|(s, a)} f(s', \pi_f(s')) \\
&= r(s, a) - \varepsilon'(s, a) + \gamma \mathbb{E}_{s'|(s, a)} \max_{a'} f(s', a'),
\end{split}
\label{eq:prop18-bellman}
\end{equation}
where 
\[
\varepsilon'(s, a) := \varepsilon(s, a) - \gamma \mathbb{E}_{s'|(s, a)} \varepsilon(s', \pi_f(s')).
\]
Equation~\eqref{eq:prop18-bellman} corresponds to the optimal Bellman equation for the modified reward function \( r - \varepsilon' \).  
Hence, \( f \) is the optimal \( Q \)-function associated with reward \( r - \varepsilon' \).  
Since \( Q^\star \) is the optimal \( Q \)-function for the original reward \( r \), it follows that
\[
\sup_{S \times A} |f - Q^\star| \leq \frac{1}{1 - \gamma} \sup_{S \times A} |\varepsilon'|.
\]
By the definition of \( \varepsilon' \), we have 
\[
\sup_{S \times A} |\varepsilon'| \leq 2 \sup_{S \times A} |\varepsilon| = 2 \sup_{S \times A} |f - Q^{\pi_f}|,
\]
which establishes the first inequality.  
The second inequality follows from the triangle inequality
\[
|Q^{\pi_f} - Q^\star| \leq |Q^{\pi_f} - f| + |f - Q^\star|,
\]
and observing that \( \frac{2}{1 - \gamma} + 1 \leq \frac{3}{1 - \gamma} \).
\end{proof}

\begin{theorem}
\label{thm:SF_error_bound}
Let $\varphi: S \times A \to \mathbb{R}^d$ be a set of base features, and let 
\(
\psi^{\pi_z}(s,a)=\int \varphi(s',a')\, M^{\pi_z}(s,a,\mathrm d s' \mathrm d a')
\)
be the true successor features of policy $\pi_z$.
Let $\hat M^{z}$ be an estimated successor measure and define the estimated successor features
\[
\hat\psi^{z}(s,a)=\int \varphi(s',a')\,\hat M^{z}(s,a,\mathrm d s' \mathrm d a').
\]
For each $z\in Z$, define the policy
\[
\pi_z(s)=\arg\max_a \hat\psi^{z}(s,a)^\top z.
\]
Assume the reward is linearly represented by the features,
\[
r(s,a)=\varphi(s,a)^\top z_r,
\]
where 
\(
z_r = \mathbb E_\rho[\varphi\varphi^\top]^{-1}\mathbb E_\rho[\varphi r]
\)
is the least–squares coefficient under a reference distribution $\rho$.
Let $V^\star$ be the optimal value function for $r$, and let $\hat V^{\pi_z}$ be the value function of $\pi_z$ under the same reward.
Then the suboptimality of the zero-shot policy $\pi_{z_r}$ is controlled by the successor-feature approximation error:
\begin{equation}
\label{eq:SF_final_bound}
\|\hat V^{\pi_{z_r}} - V^\star\|_\infty
\;\le\;
\frac{3\|z_r\|_*}{1-\gamma}\;
\sup_{s,a}\|\hat\psi^{z_r}(s,a)-\psi^{\pi_{z_r}}(s,a)\|.
\end{equation}
Moreover, for any $z\in Z$, the approximation error of $\hat\psi$ controls the deviation from the optimal $Q$-function:
\begin{equation}
\label{eq:Q_final_bound}
\sup_{s,a}\Big|\hat\psi^{z}(s,a)^\top z - Q^\star(s,a)\Big|
\;\le\;
\frac{2\|z\|_*}{1-\gamma}\;
\sup_{s,a}\|\hat\psi^{z}(s,a)-\psi^{\pi_z}(s,a)\|.
\end{equation}
\textbf{Remark.}
The bounds \eqref{eq:SF_final_bound}–\eqref{eq:Q_final_bound} show that the suboptimality of policies derived from approximate successor features is controlled by the approximation error of $\psi$, providing a quantitative bound on the zero-shot generalization performance.

\end{theorem}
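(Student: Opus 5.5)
The plan is to reduce both bounds to Proposition~\ref{prop2}, with the choice $f(s,a) := \hat\psi^{z}(s,a)^\top z$. By definition $\pi_z(s)=\arg\max_a \hat\psi^{z}(s,a)^\top z = \arg\max_a f(s,a)$, so $\pi_z$ is exactly the greedy policy $\pi_f$ of Proposition~\ref{prop2}. The key identity is then $Q^{\pi_z}(s,a)=\psi^{\pi_z}(s,a)^\top z$ under the reward $r=\varphi^\top z$. I will obtain it from Proposition~\ref{prop1} with goal space $G=S\times A$ and $\tau$ the identity: $Q^{\pi_z}(s,a)=\int r\,\mathrm dM^{\pi_z}=\int \varphi^\top z\,\mathrm dM^{\pi_z} = \psi^{\pi_z}(s,a)^\top z$, where linearity of the integral lets the constant vector $z$ be pulled out. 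Boundedness of $r$ (needed in Proposition~\ref{prop2}) I will simply assume, for instance from bounded features.

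Given this identity, the approximation error becomes a dual-norm pairing:
\begin{equation*}
|f(s,a)-Q^{\pi_z}(s,a)| = |(\hat\psi^{z}(s,a)-\psi^{\pi_z}(s,a))^\top z| \le \|z\|_*\,\|\hat\psi^{z}(s,a)-\psi^{\pi_z}(s,a)\|.
\end{equation*}
This is Hölder's inequality with $\|\cdot\|_*$ dual to the norm used on features. Taking the supremum over $(s,a)$ gives $\sup|f-Q^{\pi_f}|\le \|z\|_*\sup\|\hat\psi^z-\psi^{\pi_z}\|$.

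For \eqref{eq:Q_final_bound}, I plug this into the first inequality of Proposition~\ref{prop2}, which bounds $\sup|f-Q^\star|$ by $\frac{2}{1-\gamma}\sup|f-Q^{\pi_f}|$; here $Q^\star$ is the optimal $Q$ for the reward $\varphi^\top z$. For \eqref{eq:SF_final_bound}, I specialize to $z=z_r$, where by assumption $r=\varphi^\top z_r$ exactly, and use \eqref{eq:prop18-2}. This gives $\sup|Q^{\pi_{z_r}}-Q^\star|\le \frac{3\|z_r\|_*}{1-\gamma}\sup\|\hat\psi^{z_r}-\psi^{\pi_{z_r}}\|$. I then pass from $Q$ to $V$: since $V^\star(s)=\max_a Q^\star(s,a)$ and $\hat V^{\pi_{z_r}}(s)=Q^{\pi_{z_r}}(s,\pi_{z_r}(s))$, we have $0\le V^\star(s)-\hat V^{\pi_{z_r}}(s)\le Q^\star(s,\pi_{z_r}(s))-Q^{\pi_{z_r}}(s,\pi_{z_r}(s))\le \sup|Q^\star-Q^{\pi_{z_r}}|$.

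The main subtlety is making sure the reward in the first claim is the one linear in $z$, so that $Q^\star$ in \eqref{eq:Q_final_bound} refers to $r_z=\varphi^\top z$; for general $z$ this is the only reading under which the claim holds. The least-squares expression for $z_r$ plays no role beyond the assumed exact realizability $r=\varphi^\top z_r$. A secondary check is that in \eqref{eq:SF_final_bound} both $\hat V$ and $V^\star$ are taken for the same reward $r$.
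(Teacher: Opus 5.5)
Your overall route is the paper's. Proposition~\ref{prop1} with $G=S\times A$ gives $Q^{\pi_z}(s,a)=\psi^{\pi_z}(s,a)^\top z$; the paper writes this via the error measure $\varepsilon_z=M^{\pi_z}-\hat M^z$, but it is the same computation. The dual-norm pairing then bounds $\sup|f-Q^{\pi_z}|$ for $f(s,a)=\hat\psi^{z}(s,a)^\top z$, and Proposition~\ref{prop2} is applied to the greedy policy $\pi_z=\pi_f$. You are right that $Q^\star$ in \eqref{eq:Q_final_bound} must be the optimal $Q$ for the reward $\varphi^\top z$; the paper's ``similarly, for any $z$'' tacitly needs this reading. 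So your argument for \eqref{eq:Q_final_bound} is fine.

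The step that fails is your passage from $Q$ to $V$ for \eqref{eq:SF_final_bound}. You assert
\[
V^\star(s)-\hat V^{\pi_{z_r}}(s)\le Q^\star(s,\pi_{z_r}(s))-Q^{\pi_{z_r}}(s,\pi_{z_r}(s)).
\]
But $V^\star(s)=\max_a Q^\star(s,a)\ge Q^\star(s,\pi_{z_r}(s))$, so this inequality points the wrong way. The two sides differ by the action gap $\max_a Q^\star(s,a)-Q^\star(s,\pi_{z_r}(s))\ge 0$.

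No bound in terms of $\sup|Q^\star-Q^{\pi}|$ alone can repair this. Since $Q^\star(s,a)-Q^\pi(s,a)=\gamma\,\mathbb{E}_{s'|(s,a)}[V^\star(s')-V^\pi(s')]$, one always has $\sup|Q^\star-Q^\pi|\le\gamma\|V^\star-V^\pi\|_\infty$. Moreover, $Q^\pi\equiv Q^\star$ is possible with $V^\pi\neq V^\star$. Take a state whose two actions pay $1$ and $0$ and both lead to an absorbing zero-reward state, with $\pi$ choosing the $0$ action. So \eqref{eq:prop18-2} does not by itself yield \eqref{eq:SF_final_bound}. (The paper's proof skips the same point: it applies the $Q$-bound of Proposition~\ref{prop2} directly to the $V$-difference.)

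The missing ingredient is that $\pi_{z_r}$ is greedy with respect to $f(s,a)=\hat\psi^{z_r}(s,a)^\top z_r$. Set $\epsilon:=\sup_{s,a}|f-Q^{\pi_{z_r}}|$ and use $\max_a f(s,a)=f(s,\pi_{z_r}(s))$:
\begin{align*}
0\le V^\star(s)-\hat V^{\pi_{z_r}}(s)
&=\Bigl(\max_a Q^\star(s,a)-\max_a f(s,a)\Bigr)+\Bigl(f(s,\pi_{z_r}(s))-Q^{\pi_{z_r}}(s,\pi_{z_r}(s))\Bigr)\\
&\le \sup_{s,a}|Q^\star-f|+\epsilon\le\Bigl(\frac{2}{1-\gamma}+1\Bigr)\epsilon\le\frac{3\epsilon}{1-\gamma}.
\end{align*}
Here the first inequality of Proposition~\ref{prop2} bounds $\sup|Q^\star-f|$ by $\frac{2\epsilon}{1-\gamma}$. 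Combined with your Hölder bound $\epsilon\le\|z_r\|_*\sup_{s,a}\|\hat\psi^{z_r}(s,a)-\psi^{\pi_{z_r}}(s,a)\|$, this gives \eqref{eq:SF_final_bound} with the stated constant.
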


\begin{proof}
Let $z \in Z$ and consider the reward $r(s,a) = \varphi(s,a)^\top z_r$.
By definition of the successor feature,
\[
\psi^{\pi_z}(s,a) = \int \varphi(s',a')\, M^{\pi_z}(s,a, \mathrm{d}s'\, \mathrm{d}a').
\]
Similarly, the estimated successor feature satisfies
\[
\hat\psi^{z}(s,a) = \int \varphi(s',a')\, \hat M^{z}(s,a, \mathrm{d}s'\, \mathrm{d}a').
\]
Define the error measure between the true and estimated successor measures:
\[
\varepsilon_z(s,a, \mathrm{d}s'\, \mathrm{d}a') := M^{\pi_z}(s,a, \mathrm{d}s'\, \mathrm{d}a') - \hat M^z(s,a, \mathrm{d}s'\, \mathrm{d}a').
\]
By Proposition~\ref{prop1} with \(G = S \times A\), the $Q$-function of policy $\pi_z$ satisfies
\begin{align}
Q^{\pi_z}(s,a) 
&= \int r(s',a')\, M^{\pi_z}(s,a, \mathrm{d}s'\, \mathrm{d}a') \notag \\
&= \int r(s',a')\, \hat M^z(s,a, \mathrm{d}s'\, \mathrm{d}a') 
   + \int r(s',a')\, \varepsilon_z(s,a, \mathrm{d}s'\, \mathrm{d}a') \notag \\
&= \hat\psi^z(s,a)^\top z_r + \int r(s',a')\, \varepsilon_z(s,a, \mathrm{d}s'\, \mathrm{d}a').
\end{align}
By the dual norm property,
\[
\Big|\int r(s',a')\, \varepsilon_z(s,a, \mathrm{d}s'\, \mathrm{d}a')\Big| 
\le \|z_r\|_* \, \|\hat\psi^z(s,a) - \psi^{\pi_z}(s,a)\|,
\]
where $\|\cdot\|_*$ is the dual norm corresponding to the norm used in the supremum over measures.
Let $f(s,a) := \hat\psi^{z_r}(s,a)^\top z_r$ and recall that $\pi_{z_r}(s) = \arg\max_a f(s,a)$.
Then, by the Proposition~\ref{prop2},
\begin{align}
\|\hat V^{\pi_{z_r}} - V^\star\|_\infty 
&\le \frac{3}{1-\gamma} \sup_{s,a} |f(s,a) - Q^{\pi_{z_r}}(s,a)| \notag \\
&\le \frac{3\|z_r\|_*}{1-\gamma} \sup_{s,a} \|\hat\psi^{z_r}(s,a) - \psi^{\pi_{z_r}}(s,a)\|.
\end{align}
Similarly, for any $z \in Z$,
\begin{align}
\sup_{s,a} \big| \hat\psi^z(s,a)^\top z - Q^\star(s,a) \big|
&\le \frac{2}{1-\gamma} \sup_{s,a} | \hat\psi^z(s,a)^\top z - Q^{\pi_z}(s,a)| \notag \\
&\le \frac{2\|z\|_*}{1-\gamma} \sup_{s,a} \|\hat\psi^z(s,a) - \psi^{\pi_z}(s,a)\|.
\end{align}
This completes the proof, establishing that the suboptimality of policies derived from approximate successor features is controlled by the approximation error of $\hat\psi$.
\end{proof}

\clearpage
\section{Extended Background}
Reinforcement Learning (RL)~\cite{sutton_rl,wang2024dynamic,zhang2022trajgen} has achieved remarkable success in learning task-specific behaviors when guided by handcraft extrinsic rewards. 
However, such supervision often requires extensive manual effort and limits generalization to novel tasks. 
Inspired by the effectiveness of unsupervised pretraining in computer vision and natural language processing, Unsupervised Reinforcement Learning (URL) has emerged as a promising paradigm that enables agents to acquire reusable representations~\cite{apt, proto} and behaviors from reward-free interactions~\cite{rnd, disagreement}.
In URL, agents are trained using task-agnostic objectives, such as intrinsic rewards or self-supervised losses, to explore environments and acquire general-purpose skills or latent representations. 
These pretrained components can later be adapted to downstream tasks with improved sample efficiency.
Among various URL approaches, unsupervised skill discovery (USD) methods~\cite{cic, diayn, metra, cesd} aim to developing diverse skills via reward-free learning to enable task generalization through fine-tuning. 
Despite these advancements, most existing USD methods focus primarily on discover diverse skills, while lacking the ability to directly generalize to new tasks without additional fine-tuning, limiting the zero-shot generalization in URL. 

A prominent line of work in zero-shot URL builds upon the framework of successor representations (SR)~\cite{sr}, which decouples environment dynamics from reward specification to support generalization across tasks. 
Extensions such as successor features (SF)~\cite{sf} further enhance this formulation by allowing skill-conditioned value estimation.
State-of-the-art zero-shot methods instantiate this idea via universal successor features (USFs)~\cite{usf} or forward-backward (FB) representations~\cite{fb, fb1}. USF-based methods typically rely on learned basic features, and a variety of techniques have been proposed to facilitate this, including Laplacian eigenfunctions~\cite{lap}, low-rank transition approximations~\cite{lra-p}, contrastive learning~\cite{cl}, and hilbert space representation learning~\cite{hilp}. In contrast, FB methods avoid explicit feature learning by directly modeling successor measures through jointly learned forward and backward representations.
While existing SR methods exhibit strong zero-shot generalization in structured state-based environments, they struggle to scale to visual tasks due to the challenges of learning meaningful representations from high-dimensional inputs.
To address this limitation, we propose Saliency-guided Representation with Consistency Policy learning (SRCP), a novel framework designed to improve generalization in visual unsupervised reinforcement learning.
SRCP enhances SR-based learning through two key innovations: (1) a saliency-guided dynamics objective that learns dynamics-relevant representations by decoupling them from SR training, and (2) a consistency-based diffusion actor that models diverse skill-conditioned behaviors with improved action controllability.

Diffusion models~\cite{diffusion} have shown impressive success in modeling complex, high-dimensional distributions and generating diverse, high-quality samples across a variety of domains, including image generation, planning~\cite{cm}, and control. 
These properties make them especially appealing for RL, where the ability to capture expressive and multi-modal action distributions is crucial for balancing exploration and exploitation~\cite{imit_diff, offline_diff, edp}. Recent works have demonstrated that diffusion-based policies can outperform traditional MLP-based actors, particularly in tasks requiring diverse and multi-modal behaviors.
However, the adoption of diffusion models in RL comes with a significant computational cost.
Their iterative sampling process leads to high inference latency, making them inefficient for real-time decision-making and online learning scenarios.
To mitigate this, recent advances have introduced consistency models~\cite{cm2024, cccp, ligeneralizing} as efficient one-step approximations of diffusion processes. These models retain the generative expressiveness of diffusion while enabling faster inference, making them promising candidates for use in RL actor networks. Empirical results have shown that consistency models can match or even surpass diffusion policies in continuous control tasks, with substantially lower computational overhead.
Despite their promising properties, consistency models remain underexplored in visual URL~\cite{urlb}, which demands learning multi-modal skill-conditioned behaviors from high-dimensional visual inputs.
To bridge this gap, we propose a consistency policy equipped with URL-specific classifier-free guidance, which promotes multi-modal action distribution modeling and skill controllability in visual tasks and improves generalization in visual URL.

\clearpage
\section{Experimental Setting}
\paragraph{Environments}
We adopt the URL Benchmark~\cite{urlb} and ExORL datasets~\cite{exorl} to evaluate the task generalization performance in visual URL. The benchmark comprises 16 high-dimensional continuous control tasks across 4 distinct domains: \textit{Walker}, \textit{Quadruped}, \textit{Cheetah}, and \textit{Jaco}.
All environments provide image-based observations with a resolution of $64 \times 64 \times 3$. Agents should extract effective representations and learn skill-conditioned behaviors from high-dimensional visual observations, making this benchmark particularly challenging for evaluating zero-shot task generalization in visual URL.
We detail the four domains as follows:
\begin{itemize}
    \item \textbf{Walker Domain} involves humanoid locomotion with  action space $\mathcal{A} \in \mathbb{R}^{6}$. It includes the tasks of \textit{Walker Stand}, \textit{Walker Walk}, \textit{Walker Flip}, and \textit{Walker Run}.

    \item \textbf{Quadruped Domain} features four-legged locomotion with higher action space $\mathcal{A} \in \mathbb{R}^{16}$. It includes the tasks of \textit{Quadruped Stand}, \textit{Quadruped Walk}, \textit{Quadruped Jump}, and \textit{Quadruped Run}.

    \item \textbf{Cheetah Domain} involves fast planar locomotion with action space $\mathcal{A} \in \mathbb{R}^{6}$. It includes the tasks of \textit{Cheetah Walk}, \textit{Cheetah Walk Backward}, \textit{Cheetah Run}, and \textit{Cheetah Run Backward}.

    \item \textbf{Jaco Domain} includes manipulation tasks using a 9-DoF Jaco robotic arm. It includes the tasks of \textit{Jaco Reach Top Left}, \textit{Jaco Reach Top Right}, \textit{Jaco Reach Bottom Left}, and \textit{Jaco Reach Bottom Right}.
\end{itemize}

\paragraph{Datasets}
To promote generalization and support reward-free pretraining, the datasets in each domain are collected using 4 distinct unsupervised exploration strategies: \textit{Random Network Distillation (RND)}, \textit{Active Pretraining with Successor Features (APS)}, \textit{Active Pretraining (APT)}, and \textit{Prototype-based Exploration (PROTO)}. Each method is designed to encourage diverse trajectories without relying on task-specific reward signals.
\begin{itemize}
    \item \textbf{Random Network Distillation (RND)}~\cite{rnd} leverages the prediction error of a randomly initialized neural network as an intrinsic reward. This encourages agents to explore novel states with high prediction error, thereby promoting broad state coverage.

    \item \textbf{Active Pretraining with Successor Features (APS)}~\cite{aps} guides exploration by maximizing the norm of successor features computed from learned representations. This drives the agent toward informative and discriminative states under the successor feature formulation.

    \item \textbf{Active Pretraining (APT)}~\cite{apt} maximizes a non-parametric entropy in an abstract representation space to drive exploration, avoiding explicit density modeling. 

    \item \textbf{Prototype-based Exploration (PROTO)}~\cite{proto} learns a set of prototypical representations that summarize the agent’s exploratory experience. These prototypes serve as a compact basis for state encoding and are learned in a self-supervised manner without downstream task labels, facilitating efficient exploration.
\end{itemize}
These strategies generate diverse and task-agnostic trajectories, which serve as a foundation for evaluating zero-shot generalization in visual URL.

\clearpage
\section{Baseline Methods}
To comprehensively evaluate the generalization of SRCP framework, we compare it against a wide range of successor representation (SR) approaches. These methods can be broadly categorized into two groups:

\paragraph{(1) Successor Feature (SF) Methods with Different Basic Feature Learning Techniques~\cite{sr}.}
These approaches follow the classical successor feature formulation but differ in how they learn or define the underlying basic representations:

\begin{itemize}
    \item \textbf{Auto encoder (AE)}~\cite{autoencoder}:
This method learns state embeddings by reconstructing raw observations through a bottleneck neural network. It encodes each input into a compact latent representation and then decodes it back to the original observation space. By minimizing reconstruction error, the learned embeddings capture salient visual features of the environment, compressing redundant information while preserving the dominant factors of variation.

    \item \textbf{Successor Features with Laplacian Eigenfunctions (Lap)}~\cite{lap}:
This method learns state embeddings by computing the eigenfunctions of the graph Laplacian constructed from state transitions. It minimizes differences between adjacent state representations under a behavior policy while ensuring orthonormality, promoting clustering among nearby states and separation of distant ones.

\item \textbf{Contrastive Learning (CL)}~\cite{cl}:
CL applies a SimCLR-style contrastive objective to learn representations by distinguishing positive state pairs (e.g., temporally close) from negatives (e.g., randomly sampled). It effectively learns features aligned with the spectral structure of the successor measure but requires full trajectories and sufficient data diversity.

\item \textbf{Low-Rank Approximation of Successor Representations (LRA-SR)}~\cite{lap}:
LRA-SR extends CL by using temporal-difference learning to estimate the successor measure, improving stability compared to Monte Carlo estimation. It normalizes feature vectors to enforce orthogonality, resulting in a low-rank approximation of the successor representation matrix.

\item \textbf{Forward Dynamics Model (FDM)}~\cite{fb1}:
FDM leverages a learned forward model to predict future state embeddings from current ones, implicitly encouraging features to capture dynamics. It is a simple and scalable way to encode temporal structure for successor learning.

\item \textbf{Hilbert Representations (HILP)}~\cite{hilp}:
HILP learns basic representations by optimizing a goal-conditioned value function through inner products in a Hilbert space. Unlike contrastive or reconstruction-based methods, HILP directly aligns feature learning with the goal-reaching structure of the MDP, enabling more effective downstream adaptation.

\end{itemize}

\paragraph{(2) Forward-Backward Representation (FB)~\cite{fb1}:}
The FB method avoids explicit feature learning by directly estimating successor measures using forward and backward representations. It circumvents the challenges of learning stable embeddings and has achieved state-of-the-art results in zero-shot URL. 

These baselines cover a wide spectrum of successor representation learning strategies for URL. SRCP is evaluated in comparison to each of these to highlight its generalization capabilities in visual URL. In particular, SRCP differs by incorporating saliency-guided dynamics representation learning and consistency policy modeling, allowing it to achieve superior zero-shot performance on high-dimensional visual tasks.

\clearpage
\section{Hyper-parameter settings}

In this section, we provide the detailed hyper-parameter settings of our proposed SRCP, as shown in Table \ref{hyper}.

\begin{table}[h]
\caption{Hyper-parameter settings.}
\centering
\renewcommand\arraystretch{1.1}
\setlength{\tabcolsep}{3mm}{
\begin{tabular}{lc}
\hline
Hyper-parameter                      & Setting       \\ \hline

Pre-training frames & $5e^5$ \\
Zero-shot selection frames & $1e^4$ \\
RL replay buffer size & $1e6$ \\
Frame stack & $3$\\
Action repeat & $1$\\
$z$ vector dimensions             & $50$ \\
$z$ vector space            &    continuous   \\
RL backbone algorithm & DDPG \\
Return discount & $0.99$\\
Batch size & $512$ \\
Optimizer & Adam \\
Learning rate & $1e-4$\\
Actor activation  & ${\rm layernorm(Tanh)} \rightarrow {\rm ReLU}  $  \\
Agent update frequency & $2$ \\
Target critic network EMA & $0.01$\\
Exploration stddev clip & $0.3$ \\
Exploration stddev value & $0.2$ \\
coefficient $\omega$  & $3$ \\
coefficient $\beta$  & $0.5$ \\
coefficient $\lambda_1$  & $0.2$ \\
coefficient $\lambda_1$  & $0.2$ \\
\hline
\end{tabular}
}
\label{hyper}
\end{table}

\clearpage
\section{Understanding the Limits of Visual Representation Learning for Successor Representations}

\paragraph{Can We Directly Learn Physical State Representations from Pixels?}
To assess whether physical state supervision can effectively guide representation learning in visual URL, we explore the feasibility of regressing low-dimensional physical state vectors directly from image observations. 
Specifically, we decouple the training into two stages: (1) an encoder is trained via supervised regression to predict physical states from visual inputs, and (2) the resulting features are used to train a successor measure module for downstream tasks.
As shown in Fig.\ref{figa2}(a), the representation learned directly from ground-truth physical states achieves convergence in state prediction loss, but the converged value remains high. 
Furthermore, Fig.\ref{figa2}(b) shows that, compared to learning representations directly from visual inputs using the SR objective, using ground-truth physical states does not lead to significant performance improvement.
This suggests that, even with access to accurate physical states, the encoder still struggles to recover precise state dynamics.
This limitation carries over to the successor feature learning phase, where no notable improvement in generalization is observed. 
These findings suggest that accurately reconstructing physical states from high-dimensional images is inherently challenging, and that such reconstruction-based objectives may be insufficient for effective successor training and task generalization.

\begin{figure}[h]
\centering
\includegraphics[width=1.0\columnwidth]{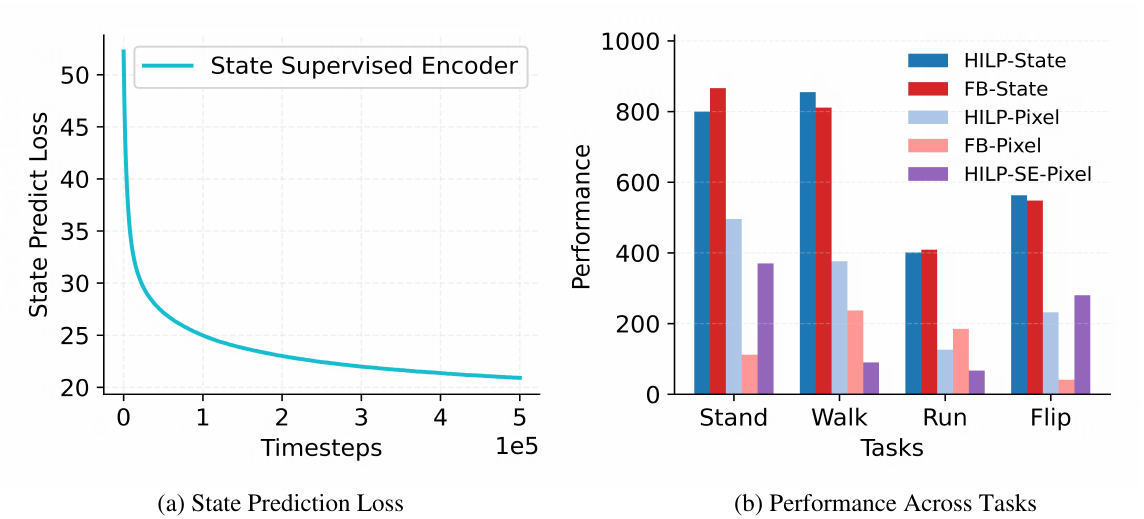} 
\caption{Experimental results of training an encoder using physical states as supervision in the Walker domain. (a) Prediction loss of the encoder during training; (b) Comparison of HILP and FB across different tasks under state input, visual input, and HILP with state-supervised encoder.}
\label{figa2}
\end{figure}

\paragraph{Beyond Physical States: The Role of Dynamics-Relevant Attention.}
Recognizing the limitations of physical state regression, we pose a more fundamental question: what kind of representation is truly useful for successor representation (SR) learning in visual URL?
To explore this, we conduct attention analysis comparing the focus of encoders trained with conventional SR objectives versus those guided by our proposed saliency-based approach. As illustrated in the main text, conventional SR methods tend to capture static or background regions in observations, which offer poor dynamics-relevant representations. In contrast, our Saliency-Guided Dynamics Encoder (SDE) consistently attends to dynamics-relevant regions that are crucial for accurate successor estimation.
This reveals a key insight: rather than attempting to reconstruct physical states, it is more effective to learn dynamics-relevant representations that suitable for successor measure. By introducing the saliency-guided dynamic representation learning, our method explicitly encourages the encoder to focus on dyanmics-relevant regions, resulting in significantly improved generalization in visual URL.

\clearpage
\section{Effectiveness of Saliency-Guided Dynamic Representation Learning}
To evaluate the impact of Saliency-Guided Dynamic Representation Learning on learned representations, we provide detailed comparisons of saliency maps and activation heatmaps produced by our encoder and those from existing SR methods. As shown in Fig.\ref{figak}, baseline methods such as FB and HILP tend to focus on task-irrelevant regions of the observation, whereas our SRCP framework consistently attends to more accurate dynamics-relevant regions. This demonstrates the effectiveness of our method in guiding the encoder to learn more meaningful dynamics-relevant representations.
\begin{figure}[h]
\centering
\includegraphics[width=0.7\columnwidth]{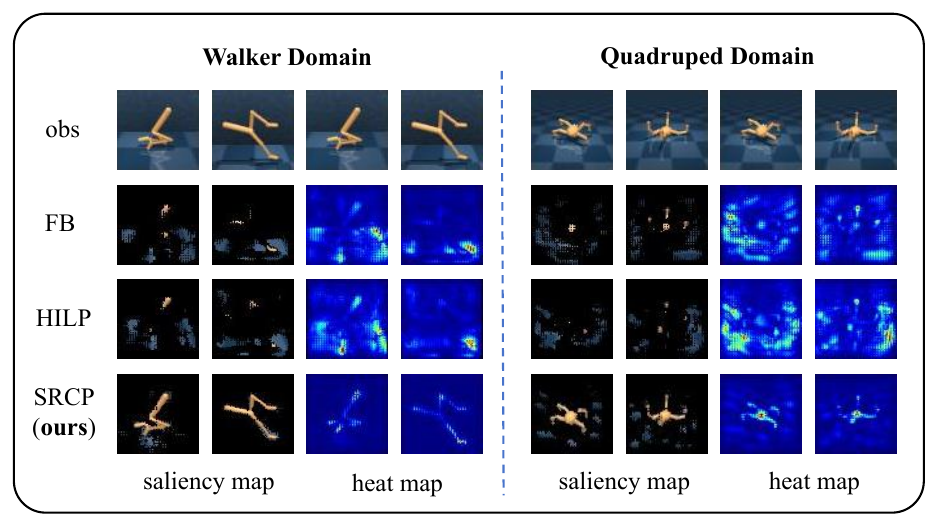} 
\caption{Detailed attention heat map of SR methods.}
\label{figak}
\end{figure}

To further evaluate the quality of representations learned by SRCP’s saliency-guided dynamics encoder, we assess its ability to capture essential physical properties of the environment. Specifically, we compare SRCP with two SR method baselines, HILP and FB, by examining how effectively their pretrained encoders support the prediction of physical state information.
Following a linear probing protocol, we freeze the encoder parameters and append a randomly initialized linear layer. 
This linear layer is trained to regress the underlying physical state vectors from the latent representations in the \textit{Walker} domain. 
We track both the convergence speed and final regression accuracy to measure the informativeness of the learned features.
As shown in Fig.\ref{figa4}, the SRCP encoder achieves significantly faster convergence and lower prediction error than those from HILP and FB. 
This indicates that the saliency-guided dynamic representation learning in SRCP leads to more physically effective representations, which are better aligned with the true underlying dynamics of the environment.
Such representations are essential not only for accurate successor measure estimation but also for zero-shot task generalization in visual URL.

\begin{figure}[h]
\centering
\includegraphics[width=0.45\columnwidth]{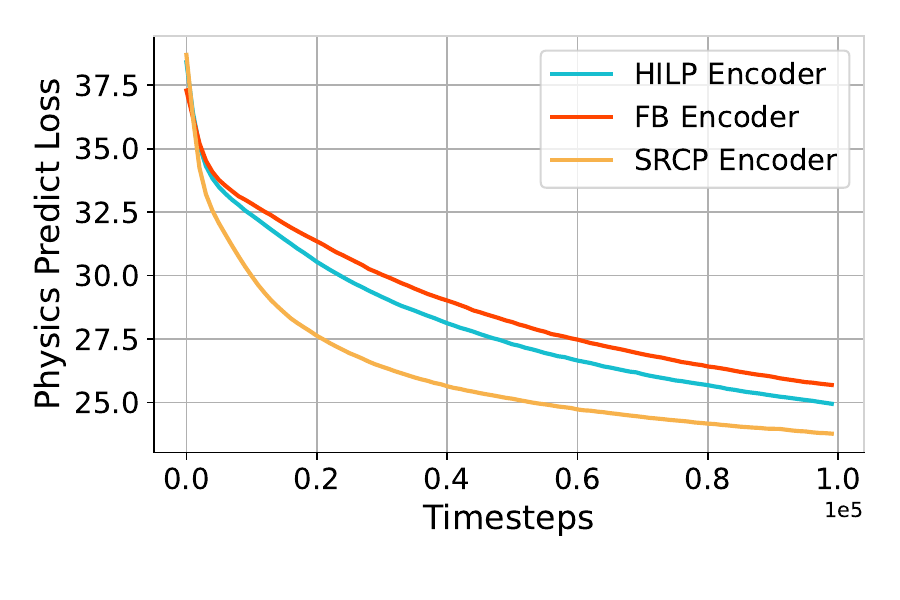} 
\caption{Prediction loss of physical states in walker domain.}
\label{figa4}
\end{figure}

\clearpage
\section{Full Ablation Study in RND Dataset}
To investigate the contribution of each component in the SRCP framework, we conduct an ablation study on all 16 tasks from the RND dataset using 4 random seeds. The compared variants include:
\begin{itemize}
    \item \textbf{HILP}: A baseline that jointly trains the encoder and successor features using the Hilbert basic feture learning and SR objective.
    \item \textbf{SRCP w/o SE}: A variant of SRCP that removes the saliency-guided dynamics encoder, using the consistency policy with an encoder trained via Hilbert basic feature learning and the SR objective.
    \item \textbf{SRCP w/o CM}: A variant of SRCP that removes the consistency actor, employing the saliency-guided dynamics encoder while using standard policy learning.
    \item \textbf{SRCP}: The complete SRCP method, combining saliency-guided dynamics representation learning with consistency-based policy learning.
\end{itemize}
As shown in Table~\ref{table_zeroshot_ablation_all}, both \textbf{SRCP w/o SE} and \textbf{SRCP w/o CM} outperform the HILP baseline across most domains, confirming the benefits of both the saliency-guided dynamics encoder and the consistency policy. Specifically, \textbf{SRCP w/o CM} performs better than HILP in most tasks, demonstrating that saliency-guided representations improve generalization. 
Meanwhile, \textbf{SRCP w/o SE} achieves higher performance in several tasks, indicating the effectiveness of the consistency policy in modeling multi-modal skill-conditioned behaviors. 
The full \textbf{SRCP} model consistently achieves the best overall results across all domains, validating that both components work synergistically to enhance generalization in visual URL.

\begin{table}[h]
\caption{Full Ablation Study}
\setlength{\tabcolsep}{5pt}
\renewcommand\arraystretch{1.15}
\centering
\resizebox{0.6\textwidth}{!}{
\begin{tabular}{cccccc}
\toprule
Domain & Task & HILP & SRCP w/o SE & SRCP w/o CM & SRCP \\
\midrule
\multirow{5}{*}{Walker} 
& Stand & $496 \pm 73$ & $519 \pm 12$ & $658 \pm 38$ & $671 \pm 51$ \\
& Walk & $376 \pm 52$ & $423 \pm 12$ & $449 \pm 75$ & $520 \pm 32$ \\
& Run & $126 \pm 8$ & $145 \pm 4$ & $173 \pm 21$ & $194 \pm 38$ \\
& Flip & $232 \pm 41$ & $293 \pm 5$ & $303 \pm 13$ & $369 \pm 25$ \\
\cline{2-6}
& Average & $308$ & $345$ & $396$ & $439$ \\
\midrule
\multirow{5}{*}{Quadruped}
& Stand & $327 \pm 126$ & $476 \pm 9$ & $595 \pm 30$ & $703 \pm 22$ \\
& Walk & $163 \pm 45$ & $253 \pm 15$ & $305 \pm 14$ & $339 \pm 27$ \\
& Run & $148 \pm 19$ & $289 \pm 35$ & $310 \pm 32$ & $361 \pm 24$ \\
& Jump & $244 \pm 122$ & $391 \pm 34$ & $413 \pm 18$ & $535 \pm 14$ \\
\cline{2-6}
& Average & $221$ & $352$ & $406$ & $485$ \\
\midrule
\multirow{5}{*}{Cheetah}
& Walk & $895 \pm 33$ & $893 \pm 44$ & $807 \pm 26$ & $859 \pm 34$ \\
& Walk Backward & $927 \pm 35$ & $951 \pm 23$ & $954 \pm 22$ & $934 \pm 45$ \\
& Run & $276 \pm 46$ & $277 \pm 14$ & $292 \pm 25$ & $322 \pm 44$ \\
& Run Backward & $297 \pm 46$ & $278 \pm 3$ & $338 \pm 6$ & $293 \pm 42$ \\
\cline{2-6}
& Average & $599$ & $600$ & $598$ & $602$ \\
\midrule
\multirow{5}{*}{Jaco}
& Top Left & $29 \pm 4$ & $30 \pm 8$ & $49 \pm 3$ & $53 \pm 9$ \\
& Top Right & $35 \pm 13$ & $54 \pm 2$ & $63 \pm 9$ & $60 \pm 7$ \\
& Bottom Left & $25 \pm 3$ & $43 \pm 3$ & $23 \pm 4$ & $43 \pm 8$ \\
& Bottom Right & $31 \pm 11$ & $49 \pm 7$ & $39 \pm 4$ & $42 \pm 8$ \\
\cline{2-6}
& Average & $30$ & $44$ & $43$ & $50$ \\
\bottomrule
\end{tabular}}
\label{table_zeroshot_ablation_all}
\vspace{-0.2cm}
\end{table}

\clearpage
\section{Effectiveness of Representation Learning Methods for Visual URL}

To evaluate the impact of different representation learning strategies under the HILP framework, we compare five variants on the RND dataset: the original HILP, HILP-TACO, HILP-TACOP, HILP-SE (SRCP w/o CM), and the full SRCP method. HILP-TACO and HILP-TACOP incorporate the state-of-the-art representation learning methods TACO~\cite{taco} and TACO-Premier~\cite{tacop} into HILP, aiming to decouple the encoder from the successor representation objective. In contrast, HILP-SE and SRCP adopt our proposed saliency-guided dynamics encoder to learn dynamics-relevant visual representations.

As shown in Table~\ref{table:hilp_variant_enc}, HILP-TACO and HILP-TACOP achieve limited improvements over HILP, with performance gains in some tasks (e.g., walker\_stand or jaco\_reach\_top\_left) but substantial drops in others (e.g., walker\_walk or cheetah\_run\_backward), leading to inconsistent generalization. 
This suggests that applying generic representation learning alone is insufficient for achieving robust generalization performance across diverse tasks.
In contrast, HILP-SE significantly outperforms all three baselines in every domain, particularly in \textit{Walker} and \textit{Quadruped}, where it improves average zero-shot returns by over 90\% compared to HILP. Furthermore, SRCP consistently achieves the best performance across most tasks, demonstrating that combining the saliency-guided dynamics encoder with consistency policy learning further enhances multi-modal action distribution modeling ability and generalization.
These results clearly highlight the superiority of the proposed saliency-guided dynamics representation in visual URL. 
It enables the agent to focus on dynamics-relevant features and supports more effective zero-shot generalization.

\begin{table}[h]
\caption{Zero-shot Performance of HILP Variants on the RND Dataset}
\setlength{\tabcolsep}{5pt}
\renewcommand\arraystretch{1.15}
\centering
\resizebox{0.7\textwidth}{!}{
\begin{tabular}{ccccccc}
\toprule
Domain & Task & HILP & HILP-TACO & HILP-TACOP & HILP-SE & SRCP \\
\midrule
\multirow{5}{*}{Walker}
& Stand & $496 \pm 73$ & $388 \pm 8$ & $622 \pm 24$ & $658 \pm 38$ & $671 \pm 51$ \\
& Walk & $376 \pm 52$ & $111 \pm 6$ & $339 \pm 14$ & $449 \pm 75$ & $520 \pm 32$ \\
& Run & $126 \pm 8$ & $73 \pm 3$ & $135 \pm 10$ & $173 \pm 21$ & $194 \pm 38$ \\
& Flip & $232 \pm 41$ & $99 \pm 8$ & $199 \pm 24$ & $303 \pm 13$ & $369 \pm 25$ \\
\cline{2-7}
& Average & $308$ & $168$ & $324$ & $396$ & $439$ \\
\midrule
\multirow{5}{*}{Quadruped}
& Stand & $327 \pm 126$ & $234 \pm 18$ & $195 \pm 16$ & $595 \pm 30$ & $703 \pm 22$ \\
& Walk & $163 \pm 45$ & $91 \pm 19$ & $122 \pm 13$ & $305 \pm 14$ & $339 \pm 27$ \\
& Run & $148 \pm 19$ & $73 \pm 10$ & $101 \pm 6$ & $310 \pm 32$ & $361 \pm 24$ \\
& Jump & $244 \pm 122$ & $86 \pm 13$ & $149 \pm 16$ & $413 \pm 18$ & $535 \pm 14$ \\
\cline{2-7}
& Average & $221$ & $121$ & $142$ & $406$ & $485$ \\
\midrule
\multirow{5}{*}{Cheetah}
& Walk & $895 \pm 33$ & $693 \pm 33$ & $599 \pm 33$ & $807 \pm 26$ & $859 \pm 34$ \\
& Walk Backward & $927 \pm 35$ & $422 \pm 15$ & $534 \pm 89$ & $954 \pm 22$ & $934 \pm 45$ \\
& Run & $276 \pm 46$ & $136 \pm 6$ & $185 \pm 7$ & $292 \pm 25$ & $322 \pm 44$ \\
& Run Backward & $297 \pm 46$ & $80 \pm 2$ & $124 \pm 24$ & $338 \pm 6$ & $293 \pm 42$ \\
\cline{2-7}
& Average & $599$ & $333$ & $361$ & $598$ & $602$ \\
\midrule
\multirow{5}{*}{Jaco}
& Top Left & $29 \pm 4$ & $46 \pm 4$ & $33 \pm 2$ & $49 \pm 3$ & $53 \pm 9$ \\
& Top Right & $35 \pm 13$ & $43 \pm 3$ & $24 \pm 3$ & $63 \pm 9$ & $60 \pm 7$ \\
& Bottom Left & $25 \pm 3$ & $25 \pm 6$ & $41 \pm 3$ & $23 \pm 4$ & $43 \pm 8$ \\
& Bottom Right & $31 \pm 11$ & $23 \pm 4$ & $24 \pm 6$ & $39 \pm 4$ & $35 \pm 10$ \\
\cline{2-7}
& Average & $30$ & $34$ & $31$ & $43$ & $48$ \\
\bottomrule
\end{tabular}}
\label{table:hilp_variant_enc}
\vspace{-0.2cm}
\end{table}

\clearpage
\section{Experimental Result on Forward Backward Representation}

To further validate the generality of SRCP, we integrate the Forward-Backward (FB)~\cite{fb} representation into the SRCP framework, denoted as SRCP(FB). 
Table~\ref{table:hilp_variant_rnd} presents the detailed zero-shot performance of FB and SRCP(FB) across four domains (Walker, Quadruped, Cheetah, and Jaco) in the APS dataset. 
SRCP(FB) consistently achieves higher returns in all tasks, with particularly notable improvements in the Walker and Cheetah domains. 
These results align with the findings in the main paper, confirming that SRCP provides a general and effective framework for enhancing the representation quality, skill expressiveness, and zero-shot generalization of various SR-based methods.

\begin{table}[h]
\caption{Zero-shot Performance of FB and SRCP(FB) on the APS Dataset}
\setlength{\tabcolsep}{5pt}
\renewcommand\arraystretch{1.15}
\centering
\resizebox{0.4\textwidth}{!}{
\begin{tabular}{cccc}
\toprule
Domain & Task & FB & SRCP(FB) \\
\midrule
\multirow{5}{*}{Walker}
& Stand & $304 \pm 122$ & $680 \pm 14$ \\
& Walk & $81 \pm 14$ & $478 \pm 18$ \\
& Run & $51 \pm 13$ & $152 \pm 12$ \\
& Flip & $66 \pm 8$ & $274 \pm 13$ \\
\cline{2-4}
& Average & $126$ & $\textbf{396}$ \\
\midrule
\multirow{5}{*}{Quadruped}
& Stand & $421 \pm 16$ & $583 \pm 21$ \\
& Walk & $239 \pm 12$ & $294 \pm 10$  \\
& Run & $208 \pm 14$ & $293 \pm 8$ \\
& Jump & $294 \pm 9$ & $421 \pm 32$ \\
\cline{2-4}
& Average & $318$ & $\textbf{398}$ \\
\midrule
\multirow{5}{*}{Cheetah}
& Walk & $12 \pm 2$ & $291 \pm 84$ \\
& Walk Backward & $48 \pm 7$ & $495 \pm 19$ \\
& Run & $13 \pm 5$ & $57 \pm 7$ \\
& Run Backward & $9 \pm 4$ & $103 \pm 13$ \\
\cline{2-4}
& Average & $21$ & $\textbf{237}$ \\
\midrule
\multirow{5}{*}{Jaco}
& Top Left & $22 \pm 6$ & $46 \pm 6$ \\
& Top Right & $23 \pm 4$ & $21 \pm 3$ \\
& Bottom Left & $36 \pm 13$ & $48 \pm 15$ \\
& Bottom Right & $19 \pm 6$ & $21 \pm 5$ \\
\cline{2-4}
& Average & $25$ & $\textbf{34}$\\
\bottomrule
\end{tabular}}
\label{table:hilp_variant_rnd}
\vspace{-0.2cm}
\end{table}

\clearpage
\section{Full Hyperparameter Ablation Study}
We provide a comprehensive ablation study on the sensitivity of SRCP to two key hyperparameters, $\omega$ and $\beta$.
Table~\ref{table_zeroshot_ablation3} shows effect of $\omega$ in Walker domain, $\omega$ controls the strength of skill-conditioned guidance. Setting $\omega=0$ leads to a marked performance drop, while moderate values enhance generalization. In contrast, large $\omega$ over-constrains the policy and slightly reduces performance.
Table~\ref{table_zeroshot_ablation2n} analyzes the effect of $\beta$ in Walker domain, which controls the weight of saliency guidance in representation learning. Increasing $\beta$ improves generalization up to a point, after which excessive emphasis on salient regions weakens the model’s ability to capture dynamics cues.
Overall, SRCP maintains stable performance across a wide range of hyperparameters, with balanced $\omega$ and $\beta$ achieving the best trade-off between skill and saliency guidance.

\begin{table}[h]
\setlength{\tabcolsep}{6pt}
\renewcommand\arraystretch{1.2}
\centering
\resizebox{0.5\textwidth}{!}{
\begin{tabular}{cccccc}
\toprule
Walker & $\omega=0$ & $\omega=2$ & $\omega=3$ & $\omega=4$& $\omega=6$ \\
\midrule
Stand    & $176 \pm 7$  & $609 \pm 40$ & $\textbf{671} \pm \textbf{51}$ & $498 \pm 30$ & $363 \pm 16$ \\
Walk     & $42 \pm 4$  & $415 \pm 40$ & $\textbf{520} \pm \textbf{32}$ & $464 \pm 16$ & $79 \pm 13$ \\
Run      & $32 \pm 6$  & $136 \pm 18$ & $\textbf{194} \pm \textbf{38}$ & $164 \pm 11$ & $70 \pm 14$ \\
Flip     & $59 \pm 7$  & $\textbf{374} \pm \textbf{18}$ & $369 \pm 25$ & $358 \pm 18$ & $206 \pm 10$ \\
\cline{1-6}
Average     & $77$  & $384$ & $\textbf{439}$ & $371$ & $180$ \\
\bottomrule
\end{tabular}}
\caption{Ablation study of parameter $\omega$ in SRCP on Walker domain in RND dataset, with 4 random seeds per task.}
\label{table_zeroshot_ablation3}
\end{table}

\begin{table}[h]
\setlength{\tabcolsep}{6pt}
\renewcommand\arraystretch{1.2}
\centering
\resizebox{0.45\textwidth}{!}{
\begin{tabular}{ccccc}
\toprule
Walker & $\beta=0$ & $\beta=0.2$ & $\beta=0.5$ & $\beta=1$ \\
\midrule
Stand    & $544 \pm 20$  & $550 \pm 26$ & $\textbf{671} \pm \textbf{51}$ & $583 \pm 18$  \\
Walk     & $354 \pm 42$  & $458 \pm 19$ & $\textbf{520} \pm \textbf{32}$ & $484 \pm 48$ \\
Run      & $156 \pm 12$  & $170 \pm 35$ & $\textbf{194} \pm \textbf{38}$ & $169 \pm 23$ \\
Flip     & $310 \pm 15$  & $315 \pm 11$ & $\textbf{369} \pm \textbf{25}$ & $346 \pm 5$ \\
\cline{1-5}
Average     & $341$  & $373$ & $\textbf{439}$ & $396$ \\
\bottomrule
\end{tabular}}
\caption{Ablation study of parameter $\beta$ in SRCP on Walker domain in RND dataset, with 4 random seeds per task.}
\label{table_zeroshot_ablation2n}
\vspace{-0.5cm}
\end{table}
\clearpage
\section{Computational Cost}
As shown in Table~\ref{tab:training_cost}, the proposed SRCP framework achieves training time and memory consumption that are comparable to those of the baseline methods across all experimental settings.
By explicitly decoupling representation learning and value learning into separate optimization objectives, SRCP avoids the computationally expensive joint optimization commonly adopted in conventional end-to-end frameworks.
This structural design effectively offsets the additional computational overhead introduced by saliency map generation, leading to similar wall-clock training time compared to the baselines.
Overall, SRCP only incurs minor and acceptable extra overhead.
On the one hand, the decoupled training paradigm eliminates chained backpropagation across multiple modules, which significantly reduces gradient computation complexity.
On the other hand, the saliency inference module is designed to be lightweight and efficient, introducing negligible latency during training.
Meanwhile, the consistency-based policy regularization imposes computational costs comparable to those of standard policy optimization objectives, without introducing heavy additional computation.
Consequently, the total training cost of SRCP remains competitive with state-of-the-art approaches despite the integration of multiple learning objectives.

\begin{table}[h!]
\centering
\caption{Computational cost on an A800 GPU for 5M steps.}
\label{tab:training_cost}
\begin{tabular}{lccc}
\toprule
Metric & FB & HILP & SRCP \\
\midrule
Training Time (h)$\downarrow$ & \textbf{23.1} & 24.2 & 24.4 \\
Memory Usage (GB)$\downarrow$ & 9.6 & \textbf{8.8} & 9.1 \\
\bottomrule
\end{tabular}
\end{table}

\section{Visually Complex Environments}
To further validate the generalization ability of SRCP in complex and noisy visual environments, we evaluate its performance on the DMC-GB benchmark, which contains various types of visual distractions.
Specifically, we train SRCP on environments with clean backgrounds and then test it under two challenging visual settings for the Walker task: Color Easy (mild color perturbations) and Color Hard (severe color variations).
As reported in Table~\ref{tab:dmcgb_color}, SRCP consistently outperforms the baseline methods FB and HILP across both settings, which demonstrates its strong generalization capability against diverse task dynamics and visual appearance variations.
In addition to visual robustness, we assess the scalability and cross-domain applicability of SRCP beyond locomotion tasks.
To this end, we conduct experiments on the Point Mass Maze navigation task, a representative goal-directed control problem distinct from continuous locomotion.
The quantitative results are presented in Table~\ref{tab:point_mass_maze}.
SRCP achieves significantly improved performance compared with all baselines, indicating that the proposed framework can effectively transfer its learning paradigm to different task domains and maintain strong decision-making performance.
For a more comprehensive evaluation under complex visual perturbations, we further test SRCP on the full DMC-GB benchmark with multiple types of visual disturbances.
The agent is trained on clean backgrounds and then evaluated under three increasingly challenging test scenarios: dynamic easy color perturbations, hard color perturbations, and natural video distractions.
Consistent with previous observations, the results in Table~\ref{tab:dmcgb_color} confirm that SRCP still outperforms FB and HILP by a clear margin, further verifying its superior generalization ability in the presence of complex and realistic visual variations.
In summary, experiments on both the DMC-GB visual generalization benchmark and the Point Mass Maze navigation task demonstrate that SRCP not only achieves strong robustness to visual noise and domain shifts but also scales effectively across different task categories, showing wide applicability in visual reinforcement learning.

\begin{table}[h!]
\centering
\caption{Performance on DMC-GB Walker.}
\label{tab:dmcgb_color}
\resizebox{.8\columnwidth}{!}{%
\begin{tabular}{l l ccc ccc ccc}
\toprule
 &  & 
\multicolumn{3}{c}{Color Easy} 
& \multicolumn{3}{c}{Color Hard} 
& \multicolumn{3}{c}{Video Easy} \\
\cmidrule(lr){3-5} \cmidrule(lr){6-8} \cmidrule(lr){9-11}
Domain & Task 
& FB & HILP & SRCP
& FB & HILP & SRCP
& FB & HILP & SRCP \\
\midrule
\multirow{5}{*}{Walker}
& Stand & $243 \pm 80$ & $364 \pm 89$ & $639 \pm 69$ & $165 \pm 66$ & $282 \pm 74$ & $506 \pm 58$ & $127 \pm 42$ & $176 \pm 32$ & $277 \pm 48$ \\
& Walk  & $62 \pm 29$  & $110 \pm 54$ & $423 \pm 47$ & $54 \pm 27$  & $73 \pm 34$  & $279 \pm 45$ & $47 \pm 14$  & $62 \pm 13$  & $162 \pm 47$ \\
& Run   & $46 \pm 25$  & $55 \pm 18$  & $165 \pm 31$ & $39 \pm 16$  & $49 \pm 21$  & $113 \pm 36$ & $38 \pm 8$   & $40 \pm 5$   & $78 \pm 10$ \\
& Flip  & $59 \pm 22$  & $98 \pm 34$  & $371 \pm 42$ & $51 \pm 25$  & $69 \pm 36$  & $156 \pm 38$ & $47 \pm 11$  & $56 \pm 11$  & $123 \pm 44$ \\
\cmidrule(lr){2-5} \cmidrule(lr){6-8} \cmidrule(lr){9-11}
& Average 
&  103 & 157 & \textbf{400}
&  77 &  118 &  \textbf{264} 
&  65 &  84 &  \textbf{160} \\
\bottomrule
\end{tabular}}
\end{table}

\begin{table}[h!]
\centering
\caption{Performance on Point Mass Maze navigation tasks.}
\label{tab:point_mass_maze}
\resizebox{.7\columnwidth}{!}{%
\begin{tabular}{l c c c c c}
\toprule
Method & Reach Top Left & Reach Top Right & Reach Bottom Left & Reach Bottom Right & Average \\
\midrule
FB    & $14 \pm 5$   & $3 \pm 1$   & $1 \pm 1$   & $0 \pm 0$   & $5$ \\
HILP  & $471 \pm 51$ & $112 \pm 19$ & $36 \pm 10$ & $2 \pm 1$   & $155$ \\
SRCP  & $635 \pm 95$ & $166 \pm 55$ & $67 \pm 18$ & $10 \pm 6$  & $\textbf{220}$ \\
\bottomrule
\end{tabular}}
\end{table}

\section{Difference between Diffusion Models and Consistency Models}
We further compare SRCP with the representative diffusion policy on the challenging Proto Quadruped locomotion tasks to verify its efficiency and practical deployment potential.
The quantitative comparison is summarized in Table~\ref{tab:diffusion_vs_srpc}.
On one hand, the full diffusion model with 40 denoising steps achieves slightly higher task performance than SRCP, but this improvement comes at the cost of drastically increased computation: it consumes nearly 2.5× the total training time of SRCP.
On the other hand, when diffusion is constrained to only a single denoising step for inference efficiency, its performance drops significantly and becomes considerably worse than SRCP.
In contrast, SRCP maintains strong task performance with inherently efficient single-step execution, without relying on iterative denoising.
These results clearly demonstrate that the proposed consistency policy effectively balances performance and computational overhead, yielding a superior performance–computation trade-off compared with diffusion-based policies for real-world robotic control.

\begin{table}[h!]
\centering
\caption{Comparison between diffusion models and SRCP.}
\label{tab:diffusion_vs_srpc}
\resizebox{.8\columnwidth}{!}{%
\begin{tabular}{
l c c c c c c
}
\toprule
Method & Stand$\uparrow$ & Walk$\uparrow$ & Run$\uparrow$ & Jump$\uparrow$ & Average$\uparrow$ & Time (h) $\downarrow$ \\
\midrule
Diffusion (1 step)   
& $626 \pm 50$ & $308 \pm 37$ & $324 \pm 18$ & $397 \pm 40$ & 414 & \textbf{24.7} \\
Diffusion (40 steps) 
& $724 \pm 30$ & $357 \pm 43$ & $354 \pm 17$ & $531 \pm 26$ & \textbf{492} & 64.8 \\
SRCP                 
& $703 \pm 22$ & $339 \pm 27$ & $361 \pm 24$ & $535 \pm 14$ & 485 & 25.4 \\
\bottomrule
\end{tabular}}
\end{table}

\section{Inference Budget} 
We further evaluate the sensitivity of SRCP to the number of task-specific transitions used during adaptation, by conducting experiments on the Walker domain with a wide range of inference budgets (as shown in Table~\ref{tab:inference_budget}).
When only 100 task-specific transitions are available, the overall performance shows a slight decrease due to the limited task-specific information.
However, across the range from 500 up to 20,000 transitions, the performance of SRCP remains stable and consistent without significant degradation.
This observation demonstrates that SRCP can achieve reliable adaptation with only a modest number of task-specific transitions, and is robust to the choice of inference budget, making it suitable for real-world scenarios where collecting large amounts of task-specific data can be costly or impractical.

\begin{table}[h!]
\centering
\caption{Effect of inference transition on SRCP performance.}
\label{tab:inference_budget}
\resizebox{.665\columnwidth}{!}{%
\begin{tabular}{lcccccc}
\toprule
Inference Transitions: & 100 & 500 & 1k & 5k & 10k & 20k \\
\midrule
Stand & $828 \pm 26$ & $817 \pm 19$ & $823 \pm 35$ & $834 \pm 34$ & $830 \pm 39$ & $834 \pm 32$ \\
Walk  & $466 \pm 87$ & $478 \pm 35$ & $508 \pm 47$ & $504 \pm 46$ & $504 \pm 66$ & $506 \pm 31$ \\
Run   & $207 \pm 42$ & $216 \pm 33$ & $224 \pm 22$ & $220 \pm 26$ & $227 \pm 16$ & $238 \pm 12$ \\
Flip  & $376 \pm 86$ & $418 \pm 26$ & $413 \pm 24$ & $434 \pm 35$ & $428 \pm 75$ & $420 \pm 28$ \\
\midrule
\rowcolor{blue!10}
Average & 469 & 482 & 492 & 498 & 497 & \textbf{500} \\
\bottomrule
\end{tabular}}
\end{table}


\end{document}